\def\eqref#1{equation~\ref{#1}}
\def\1{\bm{1}}
\def\vd{{\bm{d}}}
\def\ve{{\bm{e}}}
\def\vh{{\bm{h}}}
\def\vu{{\bm{u}}}
\def\vx{{\bm{x}}}
\def\mA{{\bm{A}}}
\def\mB{{\bm{B}}}
\def\mC{{\bm{C}}}
\def\mD{{\bm{D}}}
\def\mE{{\bm{E}}}
\def\mH{{\bm{H}}}
\def\mI{{\bm{I}}}
\def\mU{{\bm{U}}}
\def\mW{{\bm{W}}}
\def\mX{{\bm{X}}}
\def\mY{{\bm{Y}}}
\def\mZ{{\bm{Z}}}
\DeclareMathAlphabet{\mathsfit}{\encodingdefault}{\sfdefault}{m}{sl}
\SetMathAlphabet{\mathsfit}{bold}{\encodingdefault}{\sfdefault}{bx}{n}
\def\gE{{\mathcal{E}}}
\def\gG{{\mathcal{G}}}
\def\gL{{\mathcal{L}}}
\def\gM{{\mathcal{M}}}
\def\sV{{\mathbb{V}}}
\newcommand{\R}{\mathbb{R}}
\newtheorem{theorem}{Theorem}
\newtheorem{definition}{Definition}
\begin{document}
%
\title{Structure-Aware DropEdge Towards Deep Graph Convolutional Networks}

\author{Jiaqi Han, Wenbing Huang$^\ast$, Yu Rong, Tingyang Xu, Fuchun Sun, Junzhou Huang
\IEEEcompsocitemizethanks{\IEEEcompsocthanksitem Jiaqi Han and Fuchun Sun are with the Department
of Computer Science and Technology, Tsinghua University, Beijing, China.\protect
\IEEEcompsocthanksitem Wenbing Huang is with Gaoling School of Artificial Intelligence, Renmin University of China;  Beijing Key Laboratory of Big Data Management and Analysis Methods, Beijing, China.\protect
\IEEEcompsocthanksitem Yu Rong and Tingyang Xu are with Tencent AI Lab, Shenzhen, China.\protect
\IEEEcompsocthanksitem Junzhou Huang is with the Department of Computer Science and Engineering, the University of Texas at Arlington. \protect
\IEEEcompsocthanksitem Jiaqi Han and Yu Rong contribute to this work equally. \protect
\IEEEcompsocthanksitem Corresponding author: Wenbing Huang. \protect
}
\thanks{Manuscript received April 19, 2005; revised August 26, 2015.}}

%
%

\markboth{Journal of \LaTeX\ Class Files,~Vol.~14, No.~8, August~2015}%
{Shell \MakeLowercase{\textit{et al.}}: Bare Demo of IEEEtran.cls for IEEE Journals}
%



\maketitle

\begin{abstract}
  It has been discovered that Graph Convolutional Networks (GCNs) encounter a remarkable drop in performance when multiple layers are piled up. The main factor that accounts for why deep GCNs fail lies in \emph{over-smoothing}, which isolates the network output from the input with the increase of network depth, weakening expressivity and trainability. In this paper, we start by investigating refined measures upon DropEdge---an existing simple yet effective technique to relieve over-smoothing. We term our method as DropEdge++ for its two structure-aware samplers in contrast to DropEdge:
  \emph{layer-dependent sampler} and \emph{feature-dependent sampler}. Regarding the layer-dependent sampler, we interestingly find that increasingly sampling edges from the bottom layer yields superior performance than the decreasing counterpart as well as DropEdge. We theoretically reveal this phenomenon with Mean-Edge-Number (MEN), a metric closely related to over-smoothing. For the feature-dependent sampler, we associate the edge sampling probability with the feature similarity of node pairs, and prove that it further correlates the convergence subspace of the output layer with the input features.  Extensive experiments on several node classification benchmarks, including both full- and semi- supervised tasks, illustrate the efficacy of DropEdge++ and its compatibility with a variety of backbones by achieving generally better performance over DropEdge and the no-drop version. 
\end{abstract}

\begin{IEEEkeywords}
Graph Convolutional Network, DropEdge++, Layer-Dependent sampler, Feature-Dependent sampler, Mean-Edge-Number.
\end{IEEEkeywords}

\definecolor{DarkBlue}{rgb}{0,0.08,1}
\newcommand{\revise}[1]{\textcolor{DarkBlue}{#1}}

%
\IEEEpeerreviewmaketitle

\section{Introduction}
\label{sec:introduction}
%
%
%
%
\IEEEPARstart{G}{raph} Convolutional Networks (GCNs)~\cite{Kipf2017,xu2018representation,zhang2021learning,xu2022deep,chen2022graph,qian2022quantifying,gong2022self} have recently dominated the analysis of graph structures on a variety of tasks, \emph{e.g.} node classification~\cite{Kipf2017,zhang2021learning,gong2022self}. Against those based on graph kernels~\cite{fouss2006experimental}, graph embedding~\cite{perozzi2014deepwalk,grover2016node2vec}, or matrix factorization and optimization~\cite{chen2022graph,liu2023a,liu2022symmetry,liu2021convergence}, the crucial benefit of using GCNs stems from that it is capable of representing both graph topology and node features in an end-to-end training manner. Whilst much progress has been made since~\cite{Kipf2017}, stacking more layers could be the most natural way to deliver a more expressive model that is expected to capture wider neighborhood proximity of each node. However, the performance of deep GCNs (even equipped with residual connections~\cite{he2016deep}) is much worse than their shallow variants on node classification~\cite{Kipf2017}. 


The most acknowledged answer to why deep GCNs fail is \emph{over-smoothing}. As firstly proposed by~\cite{Li2018}, over-smoothing drives the output of GCNs towards a stationary point (more precisely, a low-dimension subspace as proved by~\cite{Oono2020}) that contains limited topology information (\emph{i.e.} node degree) regardless of input features with the growing number of layers stacked. 
This property, unavoidably, causes gradient vanishing and hinders the training of deep GCNs. There have been developed various methods to alleviate such issue, including the architecture-based methods~\cite{xu2018representation,klicpera2018predict,chen2020simple} and the operation-based methods~\cite{chen2019measuring,rong2020dropedge,Zhao2020,zhou2020towards}. We specify the introduction of these methods in \textsection~\ref{sec:rw}.

Among the techniques proposed so far, DropEdge~\cite{rong2020dropedge}
uniformly drops out a certain number of edges at each training iteration, which is demonstrated to relieve over-smoothing theoretically and practically, and is able to enhance the performance of various kinds of deep GCNs. Yet and still, the improvement of deep GCNs by DropEdge over shallow models is marginal, leaving room to more powerful development. Our key motivation is the identification of the main bottleneck of DropEdge---its existing sampling strategy is unable to take full advantage of the structural information among the network design and data distribution: 1) the sampling of the adjacency matrix is shared across each layer (or sampled i.i.d. for the layer-wise version~\cite{rong2020dropedge}), hence the relation of sampling behavior between consecutive layers is never characterized; 2) it will neglect the different roles of different edges by sampling each edge uniformly.

In this paper, we put a step further and propose \emph{DropEdge++}, an enhanced version of DropEdge. DropEdge++ is composed of two structure-aware enhancements: \emph{layer-dependent sampler} (inter-layer) and \emph{feature-dependent sampler} (intra-layer). The layer-dependent sampler assumes the inductive bias that the adjacency matrix of the current layer is created by dropping edges conditional on the upper layer: \emph{the Layer-Increasingly-Dependent (LID) style}, or conditional on the lower layer: \emph{the Layer-Decreasingly-Dependent (LDD) style}. Interestingly, our experiments reveal that LID boosts the training of deep GCNs much better than LDD and DropEdge. This result implies that message passing over nodes in bottom layers are more vulnerable to over-smoothing and the edges in-between should be removed more than those of the top layers. We define a novel concept of Mean-Edge-Number (MEN) and show that LID derives smaller MEN than all others. The detailed explanations are provided in \textsection~\ref{sec:ld}.

Our second contribution---Feature-Dependent (FD) sampler leverages a certain kernel to measure the feature similarity of each adjacent node pair, and the normalized kernel value over all edges is then assigned as the edge sampling rate. As we will draw in this paper, our FD not only inherits all advantages of DropEdge on alleviating over-smoothing but also correlates the output of deep GCNs with the input features even when the number of layers approaches the infinity. More details are presented in \textsection~\ref{sec:FD}. 

In summary, this work makes the following contributions:
\begin{itemize}
    \item We excavate how the relations between adjacency matrices sampled in consecutive layers are connected to over-smoothing by introducing the layer-dependent sampler. We discover that increasingly sampling edges from the bottom layer yields superior performance than the decreasing counterpart and DropEdge. We reveal this phenomenon with Mean-Edge-Number (MEN), a metric we have found closely related to over-smoothing, both theoretically and empirically.
    \item As opposed to the uniform distribution in DropEdge, we propose feature-dependent sampler that takes into consideration different roles of different edges by adaptively sampling edges based on the feature similarity of the nodes connected. We also derive that the feature-dependent sampler relieves over-smoothing by enriching the convergence subspace with node feature information.
    \item We integrate these two samplers in a unified framework dubbed DropEdge++ that can be easily incorporated into existing GCNs and their architecture-based refinements~\cite{xu2018representation,klicpera2018predict,chen2020simple}. Experiments on six node classification benchmarks in both semi- and full-supervised fashion demonstrate the superiority of DropEdge++ in relieving over-smoothing, showing it remarkably outperforms DropEdge and the vanilla models without sampling.
\end{itemize}

\section{Related Work}
\label{sec:rw}

\textbf{Graph Neural Networks}
There have been diverse attempts with graph neural networks in literature~\cite{wu2021a}. The first prominent research \cite{bruna2013spectral} builds the seminal graph convolutional model in both the spatial and spectral views. Following~\cite{bruna2013spectral}, many works \cite{Kipf2017,zhang2021learning,xu2022deep,gong2022self,defferrard2016convolutional,henaff2015deep,Li2018a,Levie2017,he2022parallelly} have been done for performance enhancement and/or efficiency obtainment. Among them, the work by~\cite{Kipf2017} gives an efficient approximation to conduct the spectral operation in the spatial domain by aggregating the information from neighbor nodes. Later, GAT~\cite{DBLP:journals/corr/abs-1710-10903} introduces the attention mechanism to extend the learnability of message aggregation. KerGNN~\cite{feng2022kergnns} integrates graph kernels into the message passing process of GNNs, by adopting trainable hidden graphs as graph filters which are combined with subgraphs to update node embeddings using graph kernels. Although KerGNN also aims at enhancing the expressivity of GNNs similar to our DropEdge++, the fundamental goal of our paper is different and it is to investigate how to address the over-smoothing issue caused by the increase of depth. Notably, we also apply kernels for edge sampling in the feature-dependent sampler, but the kernels here are used in a different way compared to KerGNN.
Recently, to reduce the computational overhead of GCNs, various node-sampling-based methods are developed~\cite{hamilton2017inductive,Monti2017,niepert2016learning,Gao2018}, including the node-wise sampling~\cite{hamilton2017inductive}, layer-wise sampling~\cite{chen2018fastgcn,Huang2018}, and subgraph-wise sampling~\cite{zeng2019graphsaint}.

\textbf{Relieving Over-Smoothing}
It has been revealed that GCNs encounter a significant drop in performance with the depth beyond two~\cite{Kipf2017}. As first introduced in \cite{Li2018} and recently generalized by \cite{Oono2020}, over-smoothing stands in the way of building deep GCNs that the output converges to a subspace which is irrelevant to the input features. On aware of this, continuous efforts have been made among which the architecture-based and operation-based methods are two popular genres. The architecture-based methods refine deep GCNs by adding dense skip connections from intermediate layers to the output (\emph{i.e.} JKNet by~\cite{xu2018representation}), or reversely from the input layer to intermediate layers (\emph{i.e.} APPNP by~\cite{klicpera2018predict}). By this means, the activations in lower layers are preserved, thus capable of relieving over-smoothing. \cite{chen2020simple} founds that additionally conducting non-linear activation and identity mapping for each layer on APPNP further boosts the performance. 
On the other hand, the operation-based approaches develop certain operations upon the generic GCNs to prevent over-smoothing, such as the activation normalization~\cite{Zhao2020,zhou2020towards} and dropping edges~\cite{rong2020dropedge,chen2019measuring}. Our method, belonging to the operation-based family, delivers two enhancements of DropEdge and can be used along with the architecture-based methods.

\begin{figure*}[ht!]
\centering
\includegraphics[width=0.70\textwidth]{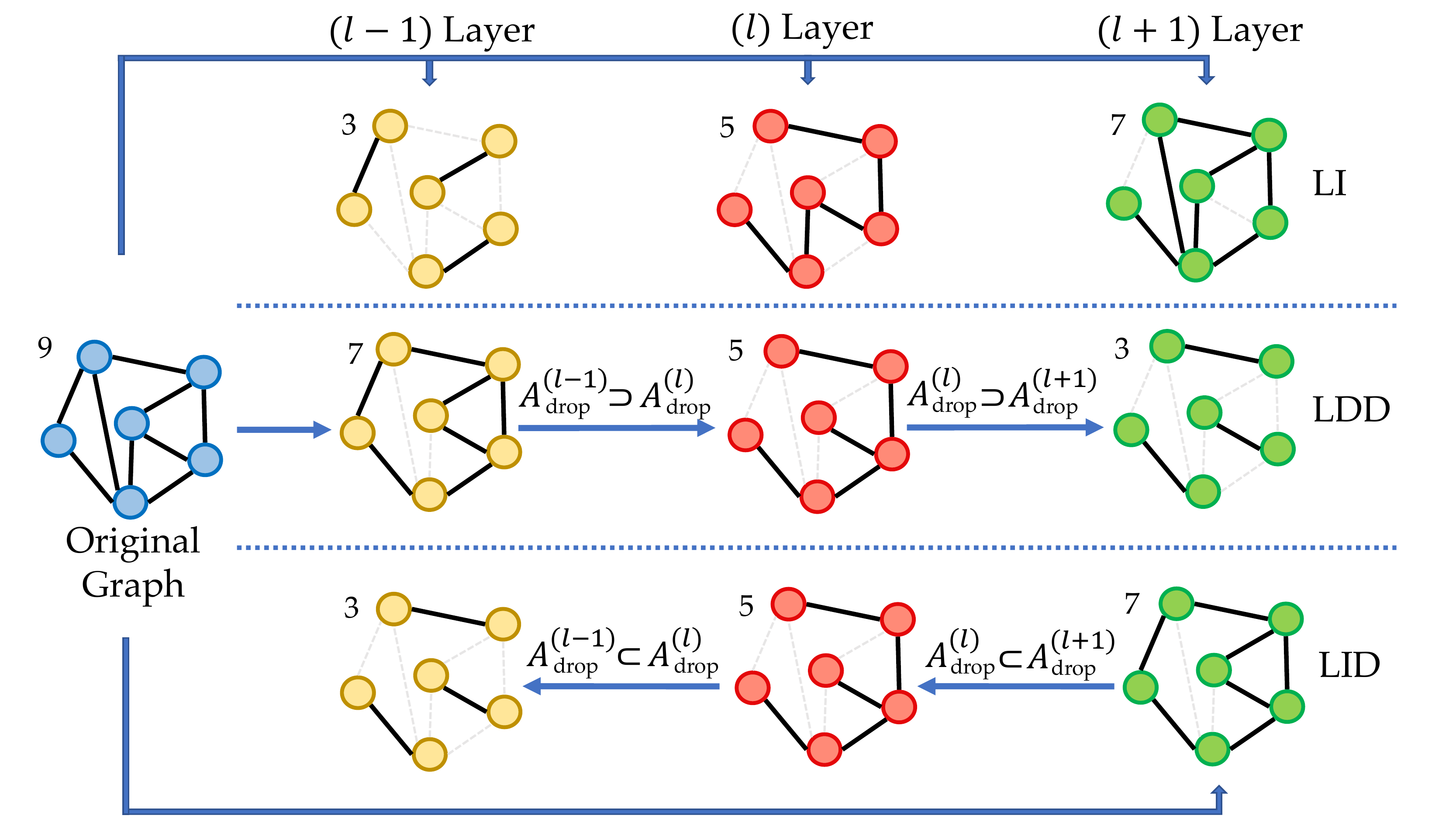}
\vskip -0.12 in
\caption{LI, LDD and LID. Solid lines are the preserved edges, and dashed lines are dropped. The number on the top left is $|\mA_{\text{drop}}^{(i)}|$ for the current layer $i$.
}
\label{fig.layerwise_samplers}
\vskip -0.1 in
\end{figure*}

\section{Preliminaries}

\textbf{Notations.}
We denote by $\gG=(\sV, \mathcal{E})$ the input graph of size $N$ with nodes $v_i\in\sV$ and edges $(v_i, v_j)\in\mathcal{E}$, by $\mX=\{\vx_1,\cdots,\vx_N\}\in\R^{N\times C}$ the node features, and by $\mA\in\R^{N\times N}$ the adjacency matrix which associates edge $(v_i, v_j)$ with $A(i,j)$. The node degrees $\vd=\{d_1,\cdots,d_N\}$ compute the sum of edge weights connected to node $i$ by $d_i$. The matrix form of $\vd$ is defined as $\mD$ whose diagonal elements are obtained from $\vd$. The symbol ``$\sim$'' denotes 
random sampling.

\textbf{GCN.}
As originally developed by~\cite{Kipf2017}, GCN recursively updates
\begin{eqnarray}
\label{Eq:gcn}
\mH^{(l+1)} &=& \sigma\left(\hat{\mA}\mH^{(l)}\mW^{(l)}\right),
\end{eqnarray}
where $\mH^{(l+1)}=\{\vh_1^{(l+1)},\cdots,\vh_N^{(l+1)}\}$ are the hidden vectors of the $l$-th layer with $\vh_i^{(l)}$ as the hidden feature for node $i$; $\hat{\mA}=\hat{\mD}^{-1/2}(\mA+\mI)\hat{\mD}^{-1/2}$ is the re-normalization of the adjacency matrix, and $\hat{\mD}$ is the degree matrix of $\mA+\mI$; $\sigma(\cdot)$ is a nonlinear function, \emph{i.e.} the ReLU function; and $\mW^{(l)}\in\mathbb{R}^{C_{l} \times C_{l+1}}$ is the filter matrix in the $l$-th layer with $C_l$ being the feature dimension. We assume $C_l$ of all layers to be the same, $C_l=C$ in our following context unless otherwise specified.  

In contrast to MLP, GCN by additionally performing message passing along $\hat{\mA}$ is able to capture the topology of the input graph. Yet, this will unfortunately incur the over-smoothing issue for deep GCNs. We review the conclusion by~\cite{Oono2020} and repeat its compact form by~\cite{rong2020dropedge} here.
\begin{theorem} [Over-Smoothing~\cite{Oono2020}]
\label{col:1}
Let $\lambda_1\le\cdots\le\lambda_N$ be the eigenvalues of $\hat{\mA}$, sorted in ascending order. Suppose the multiplicity of the largest eigenvalue $\lambda_N$ is $M(\le N)$, i.e., $\lambda_{N-M}<\lambda_{N-M+1}=\cdots=\lambda_N$ and the second largest eigenvalue is defined as $\lambda :=  \max_{n=1}^{N-M}|\lambda_n|$, and let $\mE\in\R^{N\times M}$ to be the eigenspace associated with $\lambda_{N-M+1},\cdots,\lambda_{N}$. We denote the maximum singular value of $\mW_l$ by $s_l$ (assuming $s_l\leq1$ here).
Then we have $\lambda<\lambda_{N-M+1}=\cdots=\lambda_N=1$, and
\begin{equation}
    d_\mathcal{M}(\mH^{(l)}) \le s_l\lambda d_\mathcal{M}(\mH^{(l-1)}), \label{equ:distance}
\end{equation}
where $\mathcal{M}\coloneqq\{\mE\mC|\mC\in \R^{M\times C}\}$, and the Frobenius-norm-induced distance $d_\mathcal{M}(\mH):=\inf_{\mY\in \mathcal{M}} ||\mH-\mY||_\mathrm{F}$. 
That is to say, as $l\rightarrow\infty$, the output of GCN on $\gG$ exponentially approaches a lower-dimensional space $\gM$ that is independent to the input $\bm{X}$, incurring over-smoothing.
\end{theorem}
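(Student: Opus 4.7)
The plan is to establish the one-step contraction inequality \eqref{equ:distance} and then iterate it to obtain exponential convergence. The argument naturally splits into three ingredients: (i) a spectral analysis of $\hat{\mA}$, (ii) contraction of the affine map $\mH\mapsto\hat{\mA}\mH\mW_l$, and (iii) non-expansiveness of the nonlinearity $\sigma$ with respect to the distance $d_{\gM}$.

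First I would analyze the spectrum of $\hat{\mA}=\hat{\mD}^{-1/2}(\mA+\mI)\hat{\mD}^{-1/2}$. Because adding self-loops makes the augmented graph aperiodic and the symmetric normalization is similar to the row-stochastic transition matrix $\hat{\mD}^{-1}(\mA+\mI)$, a Perron--Frobenius argument on each connected component shows that every eigenvalue lies in $(-1,1]$, with $\lambda_N=1$ attained and multiplicity $M$ equal to the number of connected components; the associated eigenvectors are $\hat{\mD}^{1/2}\mathbf{1}_{\mathcal{C}}$ for each component $\mathcal{C}$. This immediately gives $\lambda<1=\lambda_N$ and identifies $\gM=\{\mE\mC:\mC\in\R^{M\times C}\}$ as the invariant subspace corresponding to eigenvalue $1$. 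A convenient reformulation is $d_{\gM}(\mH)=\|(\mI-\mE\mE^\top)\mH\|_\mathrm{F}$, since orthogonal projection onto $\gM$ realizes the infimum.

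Next I would bound a single layer. Because $\gM$ is $\hat{\mA}$-invariant, $\hat{\mA}$ commutes with the projector $\mP^\perp\coloneqq\mI-\mE\mE^\top$, hence
\begin{equation}
d_{\gM}\!\left(\hat{\mA}\mH\mW_l\right)=\|\hat{\mA}\mP^\perp\mH\mW_l\|_\mathrm{F}\le \|\hat{\mA}\mP^\perp\|_2\,\|\mP^\perp\mH\|_\mathrm{F}\,\|\mW_l\|_2.
\end{equation}
The restriction of $\hat{\mA}$ to $\gM^\perp$ has operator norm exactly $\lambda$, and $\|\mW_l\|_2=s_l$, which gives $d_{\gM}(\hat{\mA}\mH\mW_l)\le s_l\lambda\, d_{\gM}(\mH)$. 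It remains to show $d_{\gM}(\sigma(\mZ))\le d_{\gM}(\mZ)$ for the ReLU. The key observation is that every element of $\gM$ corresponds to a vector that is non-negative entrywise (it is a positive linear combination of $\hat{\mD}^{1/2}\mathbf{1}_{\mathcal{C}}$), so the closest point $\mY^\ast\in\gM$ to $\mZ$ satisfies $\mY^\ast\ge 0$; then $\|\sigma(\mZ)-\mY^\ast\|_\mathrm{F}=\|\sigma(\mZ)-\sigma(\mY^\ast)\|_\mathrm{F}\le\|\mZ-\mY^\ast\|_\mathrm{F}=d_{\gM}(\mZ)$ by $1$-Lipschitzness of $\sigma$, and since $\mY^\ast$ is feasible (though not necessarily optimal) for $\sigma(\mZ)$, we get $d_{\gM}(\sigma(\mZ))\le d_{\gM}(\mZ)$.

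Composing the two contractions yields \eqref{equ:distance}, and iterating gives $d_{\gM}(\mH^{(l)})\le(\prod_{k\le l}s_k)\lambda^{l}\,d_{\gM}(\mH^{(0)})$. Under the assumption $s_k\le 1$ and $\lambda<1$ this tends to $0$ exponentially, proving convergence of $\mH^{(l)}$ to $\gM$ independently of $\mX$. The main obstacle is the ReLU step: the non-expansiveness argument hinges on showing the projection of any iterate onto $\gM$ can be taken with non-negative coordinates in the $\hat{\mD}^{1/2}\mathbf{1}_{\mathcal{C}}$ basis, which requires a careful look at the sign pattern preserved across layers (a slightly more involved version of this appears in \cite{Oono2020} to handle the fact that the optimal $\mY^\ast$ may not be coordinatewise non-negative for arbitrary $\mZ$). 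Everything else reduces to elementary operator-norm bookkeeping.
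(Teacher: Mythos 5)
The paper does not actually prove this statement: it is a restatement (with the compact form from the DropEdge paper) of the over-smoothing theorem of Oono and Suzuki~\cite{Oono2020}, cited as such. So there is no in-paper proof to compare against; your proposal can only be assessed as an independent reconstruction of the cited proof.

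Your spectral analysis of $\hat{\mA}$ and the one-step linear contraction are fine and essentially track the argument in~\cite{Oono2020}: $\gM$ is right-multiplication invariant (so $\mE\mC\mW_l = \mE(\mC\mW_l)\in\gM$), $\hat{\mA}$ commutes with the orthogonal projector onto $\gM^\perp$, and $\|\hat{\mA}|_{\gM^\perp}\|_2 = \lambda$ gives the factor $s_l\lambda$. The genuine gap is the ReLU step, and it is more serious than your parenthetical acknowledgment suggests. You write that ``every element of $\gM$ corresponds to a vector that is non-negative entrywise'' because it is a ``positive linear combination'' of the $\hat{\mD}^{1/2}\mathbf{1}_{\mathcal{C}}$; but $\gM=\{\mE\mC:\mC\in\R^{M\times C}\}$ is a linear subspace, so its elements are \emph{arbitrary} (signed) linear combinations of the columns of $\mE$. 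Consequently the orthogonal projection $\mY^\ast$ of $\mZ$ onto $\gM$ is generically not entrywise non-negative (e.g., take a single connected component with unit eigenvector $\ve>0$ and any $\mZ$ with $\ve^\top\vz_j<0$ for some column $\vz_j$; then that column of $\mY^\ast$ is a negative multiple of $\ve$). Your argument needs $\sigma(\mY^\ast)=\mY^\ast$ to remain in $\gM$, and that fails exactly in these cases, so the chain $d_\gM(\sigma(\mZ))\le\|\sigma(\mZ)-\sigma(\mY^\ast)\|_\mathrm{F}\le\|\mZ-\mY^\ast\|_\mathrm{F}$ does not close.

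Moreover, the fix you gesture at (``sign pattern preserved across layers'') is not the mechanism in~\cite{Oono2020}. Their lemma establishes $d_\gM(\sigma(\mZ))\le d_\gM(\mZ)$ unconditionally, purely per layer, with no tracking of signs of iterates. The proof exploits that the orthonormal basis vectors $\ve_1,\dots,\ve_M$ of $\gM$ are entrywise non-negative and have pairwise disjoint supports (one per connected component), so $\mP=\mE\mE^\top$ is block-diagonal with non-negative entries; on each block one reduces to showing, for a non-negative unit vector $\ve$ and arbitrary $\vz$, that $\|\vz\|^2-(\ve^\top\vz)^2\ge\|\sigma(\vz)\|^2-(\ve^\top\sigma(\vz))^2$, which follows by a coordinatewise case analysis using $\ve\ge 0$. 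In other words, it is the non-negativity of the \emph{basis}, not of the \emph{projection}, that does the work, and the statement is about the map $\sigma$ applied to an arbitrary argument, not about the trajectory $\mH^{(l)}$. You would need to replace your $\mY^\ast$-based argument by this projector-level inequality to make the proof go through.
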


This theorem states that the spectral of $\hat{\mA}$ has a crucial impact on over-smoothing. A potential way to alleviate over-smoothing (regardless of $s_l$ that is affected by training) is by enlarging the value of $\lambda$ to alleviate the condition $\lambda <1$, which is the goal of DropEdge~\cite{rong2020dropedge}.

\textbf{DropEdge.}
\cite{rong2020dropedge} found that the number of edges is negatively related to the value of $\lambda$, and dropping edges to a certain extent will possibly increase the value of $\lambda$ thus relieving over-smoothing. In practice, they try a random dropping process to prevent over-fitting as well. At each training iteration, DropEdge uniformly drops a subset of edges of a preset size $p|\gE|$ by a non-replacement sampling process, \emph{i.e.},
\begin{eqnarray}
\label{Eq:DropEdge}
\mA_{\text{drop}}\sim\text{DropEdge}(\mA, p|\gE|),
\end{eqnarray}
which can be also represented in a probability form, $\mA_{\text{drop}}(i,j) = \mA(i,j)*\text{Bernoulli}(1-p)$. In practice, 
The re-normalization trick is further conducted on $\mA_{\text{drop}}$, leading to $\hat{\mA}_{\text{drop}}$. The authors also prove in theory the benefits of using DropEdge (see~\cite{rong2020dropedge,huang2020tackling} for more details).

\section{Proposed Method: DropEdge++}
This section presents the details of DropEdge++, including the two enhanced samplers:  Layer-Dependent (LD) sampler and Feature-Dependent (FD) sampler. We also provide essential theoretical justifications.

\subsection{Layer-Dependent Sampler}
\label{sec:ld}
DropEdge only sparsifies the adjacency matrix once and keeps it unchanged for all layers. One might be interested in if we can remove this constraint and adopt different adjacency matrices for different layers to involve more flexibility. Inspired by this, Layer-Independent (LI) DropEdge proposed by~\cite{rong2020dropedge} constructs the different-layer adjacency matrix independently. However, LI is still unable to study how different-layer adjacency influences each other and what relation in-between helps prevent over-smoothing. This sub-section investigates these questions in the form of layer-dependent DropEdge. 
Prior to going further, we first reformulate GCN (Eq.~(\ref{Eq:gcn})) as the following Layer-wise GCN (L-GCN):
\begin{eqnarray}
\label{Eq:l-gcn}
\mH^{(l+1)} &=& \sigma\left(\mA^{(l)}\mH^{(l)}\mW^{(l)}\right),
\end{eqnarray}
where the adjacency matrix $\mA^{(l)}$ varies for different layer. In LI~\cite{rong2020dropedge}, $\mA^{(l)}$ is given by performing Eq.~(\ref{Eq:DropEdge}) for each layer independently. Different from LI, we introduce two variants of layer-dependent methods below: Layer-Increasingly-Dependent (LID) DropEdge and Layer-Decreasingly-Dependent (LDD) DropEdge.

\textbf{LID} creates the adjacency matrix of the current layer by dropping a fixed number of edges conditional on the higher layer. In form, the adjacency matrix of the $l$-th layer is recursively computed by
\begin{eqnarray}
\label{eq:lid}
\mA_{\text{drop}}^{(l)} \sim \text{DropEdge}(\mA_{\text{drop}}^{(l+1)}, \delta p|\gE|),
\end{eqnarray}
where $\delta p=(p_{\max}-p_{\min})/(L-1)$, and $p_{\min}$ and $p_{\max}$ are hyper-parameters. For the top layer, $\mA_{\text{drop}}^{(L-1)}\sim \text{DropEdge}(\mA, p_{\text{min}}|\gE|)$. Obviously, we have $\mA_{\text{drop}}^{(0)}\subset\mA_{\text{drop}}^{(1)}\subset\cdots\subset\mA_{\text{drop}}^{(L-1)}$, from sparse to dense. 


\textbf{LDD}, in contrast to LID, derives the adjacency matrix of the current layer upon the lower layer, 
\begin{eqnarray}
\label{eq:ldd}
\mA_{\text{drop}}^{(l)} \sim \text{DropEdge}(\mA_{\text{drop}}^{(l-1)}, \delta p|\gE|),
\end{eqnarray}
and $\mA_{\text{drop}}^{(0)}\sim \text{DropEdge}(\mA, p_{\text{min}}|\gE|)$.
Clearly, $\mA_{\text{drop}}^{(0)}\supset\mA_{\text{drop}}^{(1)}\supset\cdots\supset\mA_{\text{drop}}^{(L-1)}$, from dense to sparse. Both LID and LDD perform adjacency normalization and compute $\mA^{(l)}=\hat{\mA}_{\text{drop}}^{(l)}$ for Eq.~(\ref{Eq:l-gcn}). The mechanisms of LI, LID, and LDD are depicted in Fig.~\ref{fig.layerwise_samplers}.

Now, we characterize the theoretical property of LID and LDD. Note that it is no longer feasible to apply Theorem~\ref{col:1} for analyzing the over-smoothing of LID and LDD because $\mA_{\text{drop}}^{(l)}$ is varying over layers and the subspace which GCN will converge to is unknown. To overcome this difficulty, we first revisit why over-smoothing hinders the training. We compute the gradient of the training loss $\gL$ \emph{w.r.t.} $\mW^{(l)}$ in Eq.~(\ref{Eq:l-gcn}):
\begin{eqnarray}
\label{Eq:gradient}
\frac{\partial\gL}{\partial\mW^{(l)}} = (\mZ^{(l)})^{\mathrm{T}}\frac{\partial\gL}{\partial\mU^{(l)}},
\end{eqnarray}
where $\gL$ is the training loss, $\mZ^{(l)}=\mA^{(l)}\mH^{(l)}$ and $\mU^{(l)}=\mA^{(l)}\mH^{(l)}\mW^{(l)}$. If over-smoothing happens and $\mZ^{(l)}$ locates in a low-dimension space $\gM$, performing the inner-product $(\mZ^{(l)})^{\mathrm{T}}\frac{\partial\gL}{\partial\mU^{(l)}}$ dramatically decreases the magnitude of $\frac{\partial\gL}{\partial\mW^{(l)}}$, as projecting $\frac{\partial\gL}{\partial\mU^{(l)}}$ into $\gM$ will lose its values of most dimensions. It is thus difficult to train the model. 

Next, we discuss the behavior of $\mZ^{(l)}$ when $l\rightarrow\infty$. We approximate the analysis by relaxing the activation function to be linear, then the accumulated adjacency matrix from the input layer to the current layer becomes $\prod_{i=0}^{l}\mA^{(i)}$. Based on the derivation in~\cite{rong2020dropedge}, the edge number of the adjacency matrix is positively related to the speed of over-smoothing as shown in Theorem~\ref{col:1}. This motivates us to leverage the edge number of $\prod_{i=0}^{l}\mA^{(i)}$ to measure the degree of over-smoothing for current layer $\mZ^{(l)}$. Considering the updates of Eq.~\ref{Eq:gradient} for all layers, we compute the Mean Edge Number (MEN) of the accumulated adjacency matrices and introduce a new concept as follow.

\begin{definition} [MEN]
\label{de:men}
Suppose $|\mA|$ returns the edge number of $\mA$. Then, the MEN of L-GCN is defined as $\text{MEN}=\frac{1}{L}\sum_{l=0}^{L-1}|\prod_{i=0}^{l}\mA^{(i)}|$. 
\end{definition}

Despite its approximation of linear activation in Eq.~(\ref{Eq:l-gcn}), MEN is able to provide us with a portable tool to theoretically investigate the convergence behavior of L-GCN given different edge samplers. More importantly, our experiments does verify that MEN is positively related to over-smoothing, and lower MEN yields better performance (see Fig.~\ref{fig.MEN_simple}).

\begin{figure}[t]
\centering
\includegraphics[width=0.40\textwidth]{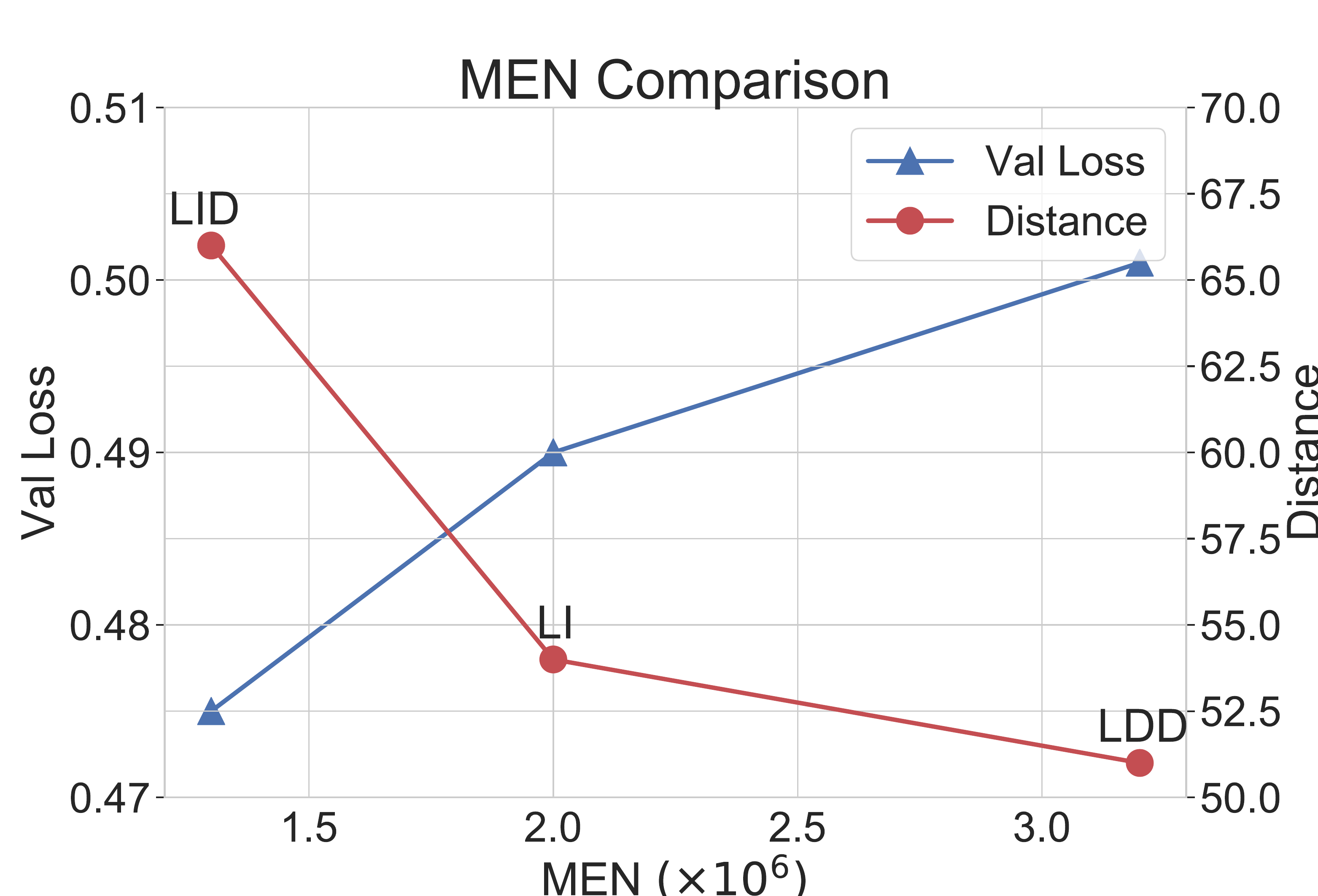}
\caption{MEN analysis upon 16-layer GCN on Cora. Distance is given by $\frac{1}{L-3} \sum \limits_{l=1} ^ {L-3} D^{(l)}$, where $D^{(l)} = ( \sum \limits_{i,j} ( \mH^{(l+1)}_{ij} - \mH^{(l)}_{ij} ) ^2 ) ^ {\frac{1}{2}}$ returns the distance between the activations of adjacent layers. Obviously, less MEN derives larger Distance, implying less over-smoothing, hence smaller loss and better performance. More cases are shown in Fig.~\ref{fig.MEN_detail_analysis}. }
\label{fig.MEN_simple}
\end{figure}


With the notion of MEN, we immediately have,
\begin{theorem}
\label{th:mean}
Let $\text{MEN}_{\text{lid}}$ and $\text{MEN}_{\text{ldd}}$ denote the expected MENs of the adjacency matrices sampled by LID and LDD, respectively, under the same values of $p_{\text{min}}$ and $p_{\text{max}}$. Then, $\text{MEN}_{\text{lid}}\leq \text{MEN}_{\text{ldd}}$.
\end{theorem}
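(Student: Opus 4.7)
The plan is to express each $|\prod_{i=0}^{l}\mA^{(i)}_{\text{drop}}|$ as an expected walk count and compare LID with LDD via a walk-reversal bijection, after coupling both samplers through a single uniform random permutation of the edges.

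Using $|\mA|=\vone^{\mathrm{T}}\mA\vone$, I would rewrite $|\prod_{i=0}^{l}\mA^{(i)}_{\text{drop}}|$ as the number of length-$(l+1)$ walks $w=(v_0,\dots,v_{l+1})$ in $\gG$ whose $i$-th edge $e_i(w)=(v_i,v_{i+1})$ lies in $\mA^{(i)}_{\text{drop}}$ for every $i\le l$, so that
\[
\E\!\left[\left|\prod\nolimits_{i=0}^{l}\mA^{(i)}_{\text{drop}}\right|\right]=\sum_{w}\Pr\!\left[e_i(w)\in\mA^{(i)}_{\text{drop}}\text{ for all }i\le l\right].
\]
I would then realise both samplers through one uniform random permutation $\pi$ of $\gE$: a short induction on the nested non-replacement sampling shows that the layer-$l$ kept subsample consists of the lowest-ranked $M_l$ edges, with $M_l^{\text{lid}}=(1-p_{\max}+l\delta p)|\gE|$ increasing in $l$ and $M_l^{\text{ldd}}=(1-p_{\min}-l\delta p)|\gE|$ decreasing in $l$. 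Using $p_{\max}-p_{\min}=(L-1)\delta p$ one checks $M^{\text{ldd}}_{L-1-l}=M^{\text{lid}}_l$, i.e., the LDD size sequence is the reversal of the LID one. Writing $r(e)=\pi(e)$, each walk-existence event reduces to a rank-threshold event.

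The core step is the walk-reversal involution $w\mapsto w^R$ with $e_i(w^R)=e_{l-i}(w)$. For an edge $e$ appearing at positions $P(e)\subseteq\{0,\dots,l\}$ in $w$, the nested structure of LID collapses the many constraints on $r(e)$ to the single tightest one $r(e)\le M^{\text{lid}}_{\min P(e)}$, while the anti-nested structure of LDD applied to $w^R$ yields $r(e)\le M^{\text{ldd}}_{l-\max P(e)}=M^{\text{lid}}_{(L-1-l)+\max P(e)}$. Since $L-1-l\ge 0$ and $\max P(e)\ge\min P(e)$, the LDD-on-$w^R$ threshold uniformly dominates the LID-on-$w$ threshold for every distinct edge, so $\Pr_{\text{ldd}}[w^R]\ge\Pr_{\text{lid}}[w]$ pointwise in $w$. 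Because $w\mapsto w^R$ is a bijection on length-$(l+1)$ walks, summing gives $\E[|\prod_i\mA^{(i)}_{\text{ldd}}|]\ge\E[|\prod_i\mA^{(i)}_{\text{lid}}|]$ for each $l$, and averaging over $l\in\{0,\dots,L-1\}$ yields $\text{MEN}_{\text{lid}}\le\text{MEN}_{\text{ldd}}$.

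The main obstacle I foresee is verifying the pointwise comparison for walks with repeated edges: the collapse of constraints above and the identity $\min P^R(e)=l-\max P(e)$ must be handled carefully when a single edge revisits multiple positions on both the forward and reversed walk. Conceptually this is a routine extension of the distinct-edge case, but it requires careful bookkeeping of how the nested and anti-nested structures act on multi-position edges, and of the interpretation of $|\cdot|$ as the total entry sum so that the walk-counting identity $|\prod_i\mA^{(i)}|=\vone^{\mathrm{T}}\prod_i\mA^{(i)}\vone$ remains valid throughout.
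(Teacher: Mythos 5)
Your proof is correct and reaches the paper's conclusion, but it takes a more elementary, combinatorial route after setting up essentially the same coupling. The paper's proof also couples LID and LDD to be exact reversals of one another (it simply asserts this ``without loss of generality''), and then finishes with a two-line matrix identity: expanding $\mA''_l=\prod_{i=L-1}^{L-l-1}\mA_i$ with $\mA_i=\mA_{i-L+l+1}+\delta\mA_{i-L+l+1}$, the ``all-small'' term of the binomial is exactly $(\mA'_l)^{\mathrm{T}}$ (using symmetry of each $\mA_i$), and the remainder $\mB$ is entrywise nonnegative, so $|\mA'_l|\le|\mA''_l|$ pointwise under the coupling. You instead make the coupling fully explicit via a uniform rank permutation (which is genuinely a nicer justification of the paper's ``WLOG'' step), and then prove the pointwise inequality by a walk-reversal bijection. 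The paper's route is slicker, sidesteps the repeated-edge bookkeeping entirely, and is agnostic to whether $|\cdot|$ counts nonzeros or sums entries; your route is more pedestrian and requires the entry-sum reading of $|\cdot|$ for the walk-counting identity, but it gives a concrete probabilistic picture (walk survival thresholds) and exposes why the reversal coupling is the ``right'' one.

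One small slip in the repeated-edge step: for an edge $e$ occurring at positions $P(e)\subseteq\{0,\dots,l\}$ in $w$, its positions in $w^{R}$ are $P^{R}(e)=\{l-j:j\in P(e)\}$, and since $M^{\text{ldd}}_i$ is \emph{decreasing}, the tightest LDD constraint comes from the \emph{largest} reversed position, $\max P^{R}(e)=l-\min P(e)$, not from $l-\max P(e)$ as written. The correct threshold is therefore $M^{\text{ldd}}_{\,l-\min P(e)}=M^{\text{lid}}_{\,(L-1-l)+\min P(e)}$, which dominates the LID threshold $M^{\text{lid}}_{\min P(e)}$ using only $L-1-l\ge 0$ and monotonicity of $M^{\text{lid}}$ (the appeal to $\max P(e)\ge\min P(e)$ is not needed). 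Your stated inequality happens to remain true, but the derivation as written identifies the wrong binding constraint.
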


\begin{proof}
    Without loss of generality, we assume the adjacency matrices sampled by LID and LDD are in the reverse order to each other, namely, 
\begin{eqnarray}
\nonumber
\mathbb{A}_{\text{lid}} &=& \{\mA_0, \mA_1, ..., \mA_{L-1}\}, \\    
\nonumber
\mathbb{A}_{\text{ldd}} &=& \{\mA_{L-1}, \mA_{L-2}, ..., \mA_{0}\},
\end{eqnarray}
where 
$\mA_{0} \subset \mA_{1} \subset ... \subset \mA_{L-1}$. Further, we denote the accumulated products as $\mA'_l = \prod \limits_{i=0}^l \mA_i$ and $\mA''_l =\prod \limits_{i=L-1}^{L-l-1} \mA_i$. We arrive at $\text{MEN}_{\text{lid}} = \frac{1}{L}\sum_{l=0}^{L-1} |\mA'_l|,\text{MEN}_{\text{ldd}}=\frac{1}{L}\sum_{l=0}^{L-1} |\mA''_l|$.

In order to prove $\text{MEN}_{\text{lid}}\leq \text{MEN}_{\text{ldd}}$, we prove $|\mA'_l|\leq|\mA''_l|$ for each $l\in[0,L-1]$. 

Note that $\mA_{i-L+l+1} \subset \mA_{i}$ since  $i-L+l+1\leq i$, for $0\leq l\leq L-1$. So, we can define $\mA_{i}=\mA_{i-L+l+1}+\delta\mA_{i-L+l+1}$ for abbreviation, where $\delta\mA_{i-L+l+1}\geq 0$. Putting this into $\mA''_{l}$, we easily devise
\begin{eqnarray}
\nonumber
\mA''_l &=& \prod \limits_{i=L-1}^{L-l-1} \mA_i, \\
\nonumber
&=& \prod \limits_{i=L-1}^{L-l-1} (\mA_{i-L+l+1}+\delta\mA_{i-L+l+1}),\\
\nonumber
&=& \mB + \prod \limits_{i=L-1}^{L-l-1}\mA_{i-L+l+1} ,\\
\nonumber
&=& \mB + \prod \limits_{i=l}^0 \mA_i, \\ 
\nonumber
&=& \mB + (\mA'_l)^{\mathrm{T}},
\end{eqnarray}
where $\mB\geq0$, and the last equivalence holds due to the symmetry of each adjacency matrix.
Hence, it is natural to see that $|\mA'_l|\leq|\mA''_l|$, which concludes the proof.
\end{proof}

It suggests that LID incurs the least MEN and delivers the best trainability and generalization than LDD. Note that Theorem~\ref{th:mean} does not mean smaller MEN always leads to better performance. For instance, dropping all edges produces zero MEN and completely eliminates over-smoothing, but it yields performance detriment as well, since the model has degenerated to an MLP that is not topology-aware. Actually, we should choose edge sampling rate for each particular case to balance between reducing over-smoothing and topology modeling. In our experiments, the best sampling rate is determined by validation.

LI is a layer-wise version of DropEdge, it is also valuable to justify how LID performs compared to LI. Yet it is not trivial to directly contrast their MENs in theory. Instead, we draw their correlation in terms of an upper-bound of MEN, defined as $\text{MEN}^{\ast}=\frac{1}{L}\sum_{l=0}^{L-1}\prod_{i=0}^{l}|\mA^{(i)}|$.
\begin{theorem}
\label{th:mean-li}
Suppose LI shares the same dropping rate as LID for each layer, and let $\text{MEN}^{\ast}_{\text{lid}}$ and $\text{MEN}^{\ast}_{\text{li}}$ denote the expected $\text{MEN}^{\ast}$ of LID and LI, respectively. Then, $\text{MEN}^{\ast}_{\text{lid}}\leq \text{MEN}^{\ast}_{\text{li}}$. 
\end{theorem}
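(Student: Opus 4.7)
My plan is to reduce $\text{MEN}^{\ast}$ for both schemes to a deterministic function of the per-layer edge counts $|\mA^{(l)}|$, and then verify that these counts agree under the matched dropping-rate hypothesis. Because $\text{MEN}^{\ast}=\frac{1}{L}\sum_{l=0}^{L-1}\prod_{i=0}^{l}|\mA^{(i)}|$ involves only the scalars $|\mA^{(i)}|$ and no matrix products, equality of the per-layer counts immediately forces equality of $\text{MEN}^{\ast}$, which in turn yields the claimed inequality as a (degenerate) special case.

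First I would unroll the LID recursion in Eq.~(\ref{eq:lid}). Starting from $|\mA_{\text{drop}}^{(L-1)}|=(1-p_{\min})|\gE|$ and noting that each step removes exactly $\delta p|\gE|$ edges (DropEdge is used in its preset-size, non-replacement form from Eq.~(\ref{Eq:DropEdge})), a short induction yields $|\mA_{\text{drop}}^{(l)}|=(1-p_{\min}-(L-1-l)\delta p)|\gE|$ for every $l$. Crucially, these counts are deterministic rather than random. For LI with the per-layer drop rate $p_l=p_{\min}+(L-1-l)\delta p$ matched to LID, the very same formula gives the same $|\mA_{\text{drop}}^{(l)}|$. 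Consequently the products $\prod_{i=0}^{l}|\mA^{(i)}|$ agree term by term, so $\E[\text{MEN}^{\ast}_{\text{lid}}]=\E[\text{MEN}^{\ast}_{\text{li}}]$ and the claim $\leq$ follows; the outer expectations are redundant here because both sides are already deterministic under fixed-count sampling.

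The main obstacle I anticipate is one of interpretation rather than calculation, namely which sampling model one attributes to DropEdge. Under the fixed-count form adopted above, the proof collapses to the counting identity $|\mA_{\text{drop}}^{(l)}|=(1-p_l)|\gE|$. If one instead adopts the Bernoulli-per-edge form $\mA_{\text{drop}}(i,j)=\mA(i,j)\cdot\mathrm{Bernoulli}(1-p)$, layer sizes become random---independent across $l$ in LI, but positively correlated in LID through the cascade---so a Jensen/covariance argument would in fact suggest $\E[\prod_i|\mA^{(i)}|]$ is larger for LID, reversing the inequality. To keep Theorem~\ref{th:mean-li} well posed, I would therefore state the fixed-count interpretation explicitly at the outset, consistent with the ``preset size $p|\gE|$ by a non-replacement sampling process'' description in the preliminaries, and then execute the short induction sketched above.
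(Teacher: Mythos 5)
Your argument proves a different, and much weaker, statement than the paper intends. You read ``same dropping rate for each layer'' as ``same number of surviving edges at each layer,'' work in the fixed-count sampling model where $|\mA^{(l)}_{\text{drop}}|$ is deterministic, and conclude that $\text{MEN}^{\ast}_{\text{lid}}$ and $\text{MEN}^{\ast}_{\text{li}}$ are literally equal. But that reading cannot be the paper's: the paper's proof explicitly manipulates the marginal $P(m_l)$ and the transition kernel $P(m_{l-1}\mid m_l)$ of the edge counts $m_l=|\mA^{(l)}_{\text{drop}}|$, which only has content if these counts are random. More decisively, the symbol $p_l$ in the paper's proof is used as the \emph{conditional} retention rate of the LID step from layer $l+1$ down to layer $l$ (so that $\mathbb{E}[m_{l-1}\mid m_l]=p_{l-1}m_l$), but as the \emph{absolute} retention rate relative to the full $\mA$ for LI (so that $\mathbb{E}_{\text{li}}[m_l]=p_l n$ with the $m_l$ mutually independent). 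Under this matching, LID's absolute rate at layer $l$ is $\prod_{i\ge l}p_i\le p_l$, strictly below LI's, so the two samplers do \emph{not} agree on per-layer edge counts and the theorem is a genuine inequality rather than an identity. The two essential ideas your sketch never reaches are (i) the law of total expectation peeling off one layer at a time, $\mathbb{E}_{\text{lid}}\bigl[\prod_{i=0}^l m_i\bigr]=p_0\,\mathbb{E}\bigl[m_1\prod_{i=1}^l m_i\bigr]$, and (ii) the crude bound $m_1\le n$ that converts the surplus factor $m_1$ into $n$ before recursing; iterating both gives $\le\prod_{i=0}^l(p_i n)=\mathbb{E}_{\text{li}}\bigl[\prod_{i=0}^l m_i\bigr]$.

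Your closing worry --- that under Bernoulli per-edge sampling the positive correlation of the LID cascade would push $\mathbb{E}_{\text{lid}}[\prod_i m_i]$ above $\mathbb{E}_{\text{li}}[\prod_i m_i]$, reversing the inequality --- is a correct observation, but only for the matching you chose (equal absolute per-layer rates). It does not apply to the paper's matching, where LI is handed the strictly larger per-layer rate $p_l$ against LID's $\prod_{i\ge l}p_i$; that density surplus is precisely what the $m_l\le n$ step cashes in, and it dominates the correlation effect you identified. In short, the theorem is not ill posed: you have matched the two samplers in the one way that trivializes (or, under Bernoulli, reverses) the claim, whereas the paper's proof quietly adopts the matching under which it is both nontrivial and true. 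A faithful proof needs the probabilistic machinery, not the deterministic counting identity.
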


\begin{proof}
Denote $P(m_{l-1}|m_{l})$ as the probability of the edge number of the $(l-1)$-th layer being $m_{l-1}$, conditioned on the edge number of the $l$-th layer being $m_{l}$. Further denote $|\mA| = n$, and the rate of preserved edges for the $l$-th layer as $p_l$.
\begin{eqnarray}
\nonumber
&& \mathbb{E}_{\text{lid}} \left( \prod \limits_{i=0}^l |\mA_{\text{drop}}^{(i)}| \right) \\
\nonumber
&=&\sum \limits_{
\begin{tiny}
{m_0\leq m_1 \leq ... \leq m_l}\end{tiny}
}\left(P(m_l)\prod \limits_{i=0} ^ {l-1} P(m_{l-i-1}|m_{l-i})  \prod \limits_{i=0}^l m_i\right), \\
\nonumber
&=& \sum \limits_{
\begin{tiny}
{m_1 \leq ... \leq m_l}\end{tiny}
}\left(P(m_l)\prod \limits_{i=0} ^ {l-2} P(m_{l-i-1}|m_{l-i})\prod \limits_{i=1}^l m_i\right) \\
\nonumber 
&& \cdot \sum \limits_{\begin{tiny}{m_0\leq m_1}\end{tiny}} P(m_0|m_1)m_0,\\
\nonumber
&=& \sum \limits_{
\begin{tiny}
{m_1 \leq ... \leq m_l}\end{tiny}
}\left(P(m_l)\prod \limits_{i=0} ^ {l-2} P(m_{l-i-1}|m_{l-i})\prod \limits_{i=1}^l m_i\right) p_0m_1,\\
\nonumber
&\leq& \sum \limits_{
\begin{tiny}
{m_1 \leq ... \leq m_l}\end{tiny}
}\left(P(m_l)\prod \limits_{i=0} ^ {l-2} P(m_{l-i-1}|m_{l-i}) \prod \limits_{i=1}^l m_i\right)p_0 n,\\
\nonumber
&\leq& \cdots \\
\nonumber
&\leq& \prod \limits_{i=0}^l \left( p_i n \right), \\
\nonumber
&=& \mathbb{E}_{\text{li}} \left( \prod \limits_{i=0}^l |\mA_{\text{drop}}^{(i)}| \right). 
\end{eqnarray}

Hence, we have $\text{MEN}^{\ast}_{\text{lid}}\leq \text{MEN}^{\ast}_{\text{li}}$.

\end{proof}

\subsection{Feature-Dependent Sampler}
\label{sec:FD}
In this section, we introduce another sampler, FD. Unlike DropEdge in Eq.~(\ref{Eq:DropEdge}) where each edge is sampled uniformly, FD draws edges according to a measurement by a kernel function of node feature pairs. We define the kernel function as $K(\bm{x}_1,\bm{x}_2)$ that returns the similarity of two node features $\bm{x}_1$ and $\bm{x}_2$, for instance, $K(\bm{x}_1,\bm{x}_2)=\bm{x}_1^{\mathrm{T}}\bm{x}_2$ 
in the linear case. Then, $\mA_{\text{drop}}$ is constructed as follow,
\begin{eqnarray}
\label{eq:FD}
\mA_{\text{drop}}\sim\text{FD}(\mA,\mX, p|\gE|),
\end{eqnarray}
in other words,
\begin{eqnarray}
\mA_{\text{drop}}(i,j) =  \mA(i,j)*\text{Bernoulli}(\frac{(1-p)K(\bm{x}_i,\bm{x}_j)}{\sum_{(i,j)\in\gE}{K(\bm{x}_i,\bm{x}_j)}})
\end{eqnarray}
Similar to DropEdge, we employ the non-replacement sampling and utilize the re-normalization trick on $\mA_{\text{drop}}$ to obtain $\hat{\mA}_{\text{drop}}$. The kernel computation is economical with the complexity of $O(|\mathcal{E}|)$. We pre-compute the kernels to eliminate the need of computing them at each training iteration in our implementation. 
The kernel here is applied to yield the sampling distribution for the edges, different from KerGNNs~\cite{feng2022kergnns} which leverage graph kernels to compute the convolution between the local subgraph and the trainable graph filters.

Intuitively, the edges connecting two nodes of similar features are more potentially intra-class connections and should be preserved more often, while the edges connecting dissimilar nodes should be possibly removed as they are likely linking nodes of distinct classes. Our FD sampler does serve this goal by employing the kernel function as an additional weight of the edge sampling probability. 
In our experiments (\textsection~\ref{sec:FD_compare}), FD outperforms all other counterparts, which verifies the validity of our design.  

Furthermore, FD correlates the adjacency matrix with the node features. This is beneficial since the output of GCN still involves the information of the input features even if the over-smoothing is unavoidable, according to Theorem~\ref{col:1}. We summarize this property as a theorem for better readability. 
\begin{theorem} 
\label{th:FD}
FD correlates the subspace $\gM$ with the input features $\mX$. To be specific, the bases of $\gM$ are returned as $\ve_m=\mD^{\frac{1}{2}}\vu_m$ for $m=1,\cdots, M$, where the i-th expected diagonal element of $\mD$ is $\mathbb{E}[D_i]=(1-p)\sum_{j}\frac{K(\bm{x}_i,\bm{x}_j)\mA(i,j)}{\sum_{(i,j)\in\gE}{K(\bm{x}_i,\bm{x}_j)}}$ , and $\vu_m$ is the indicator vector of the $m$-th connected component, \emph{i.e.} $u_{m}(i)=1$ if node $i$ belongs to $m$ and $u_{m}(i)=0$ otherwise.
\end{theorem}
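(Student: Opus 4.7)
The plan is to reduce the statement to a direct eigenvector calculation followed by an expectation over the FD sampling. By Theorem~\ref{col:1}, the subspace $\gM$ toward which the deep-GCN output converges is exactly the eigenspace of $\hat{\mA}_{\text{drop}}$ at its largest eigenvalue $\lambda_N=1$, where $\hat{\mA}_{\text{drop}}=\hat{\mD}^{-1/2}(\mA_{\text{drop}}+\mI)\hat{\mD}^{-1/2}$. Thus it suffices to (i) characterize this eigenspace in closed form and (ii) trace the dependence of its defining degree vector on the FD kernel.

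First, I would invoke the standard spectral fact for a re-normalized adjacency with self-loops: the eigenspace at eigenvalue $1$ is spanned by $\{\hat{\mD}^{1/2}\vu_m\}_{m=1}^{M}$, one basis vector per connected component of $\mA_{\text{drop}}+\mI$, with $\vu_m$ the component indicator. This can be verified in one line: $\hat{\mA}_{\text{drop}}\hat{\mD}^{1/2}\vu_m=\hat{\mD}^{-1/2}(\mA_{\text{drop}}+\mI)\vu_m$, and because $\vu_m$ is constant on its component and zero elsewhere, each nonzero entry of $(\mA_{\text{drop}}+\mI)\vu_m$ equals the corresponding self-looped degree, so the product collapses back to $\hat{\mD}^{1/2}\vu_m$. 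This already gives the functional form $\ve_m=\mD^{1/2}\vu_m$ claimed in the theorem, with $\mD$ being the self-looped degree matrix of the sampled adjacency.

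Second, I would compute $\mathbb{E}[\mA_{\text{drop}}(i,j)]$ under the FD sampling of Eq.~(\ref{eq:FD}). Since each edge is drawn by a Bernoulli with rate $q_{ij}=\frac{(1-p)K(\vx_i,\vx_j)}{\sum_{(i,j)\in\gE}K(\vx_i,\vx_j)}$, linearity of expectation gives $\mathbb{E}[\mA_{\text{drop}}(i,j)]=q_{ij}\mA(i,j)$; summing over the row reproduces exactly the expression for $\mathbb{E}[D_i]$ stated in the theorem. Substituting this back into $\ve_m=\mD^{1/2}\vu_m$ shows that the basis vectors of $\gM$ carry the kernel similarities $K(\vx_i,\vx_j)$, which is precisely the sense in which $\gM$ becomes correlated with the input features $\mX$---in contrast to vanilla DropEdge, where $\mathbb{E}[D_i]$ is a constant multiple of $d_i$ and hence independent of $\mX$.

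The main subtlety I anticipate is the gap between a per-realization eigenspace and the \emph{expected} degree appearing in the statement. I plan to resolve this by noting that the closed form $\ve_m=\mD^{1/2}\vu_m$ holds realization-wise for every sampled $\mA_{\text{drop}}$, and that taking expectations in the degree entries is what makes the dependence on the kernel---and therefore on $\mX$---explicit. A minor technicality is that the indicators $\vu_m$ depend on the sampled connectivity: I would argue that, as long as edges with strictly positive kernel weights are retained with positive probability, the connected-component pattern (and therefore the multiplicity $M$) is preserved, so the characterization of $\gM$ remains well-defined.
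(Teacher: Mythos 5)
Your proposal takes essentially the same route as the paper: the paper simply cites Proposition~1 of Oono and Suzuki for the characterization of $\gM$ as the span of $\hat{\mD}^{1/2}\vu_m$, whereas you re-derive that spectral fact in one line, and the expectation computation $\mathbb{E}[D_i]=\sum_j q_{ij}\mA(i,j)$ is identical. One small caveat you should tighten: your assertion that the connected-component pattern (and hence $M$) is ``preserved'' when positive-weight edges are retained with positive probability is not true per realization---dropping edges can split a component---so $\vu_m$ and $M$ are really random quantities determined by $\mA_{\text{drop}}$; the paper glosses over this same point, and the correct reading is that the closed form $\ve_m=\mD^{1/2}\vu_m$ holds realization-wise for whatever components arise, with the feature dependence entering through $\mathbb{E}[D_i]$.
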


\begin{proof}

We borrow Proposition 1~\cite{Oono2020} and find that the convergence subspace $\gM$ of GCN is formulated by the bases of  $\ve_m=\mD^{\frac{1}{2}}\vu_m$ for $m=1,\cdots, M$ where $\mD$ is the degree matrix of the normalized adjacency matrix of the input graph $\gG$, and $\vu_m$ is the indicator vector of the $m$-th connected component of $\gG$, \emph{i.e.} $u_{m}(i)=1$ if node $i$ belongs to $m$ and $u_{m}(i)=0$ otherwise. As it also holds for the graph $\gG$ after performing FD, the bases of convergence subspace $\gM$ become $\ve_m=\mD^{\frac{1}{2}}\vu_m$ for $m=1,\cdots, M$. It is natural to see that $\mathbb{E}[D_i]=\mathbb{E}[\sum_{j}
   \mA_{\text{drop}}(i,j)]=(1-p)\sum_{j}\frac{K(\bm{x}_i,\bm{x}_j)\mA(i,j)}{\sum_{(i,j)\in\mathcal{E}}{K(\bm{x}_i,\bm{x}_j)}}$.
\end{proof}
By Theorem~\ref{th:FD}, our FD sampler can bridge the dependence between the convergence subspace and the input features, as opposed to DropEdge. This better relieves the effect of over-smoothing since the output after an infinite number of layers is still related to the input features.

\begin{algorithm}[t] 
\caption{The sampling method of DropEdge++}
 {\bf Input:}
The adjacency matrix $\mA$;
The feature matrix $\mX$;\\
The set of edges $\gE$;
The number of layers $L$;\\
The sampling rates $p_{\text{min}}, p_{\text{max}}$;\\
 {\bf Output:}
The series of the sampled adjacency matrices $\mathbb{A} = \{\mA_{\text{drop}}^{(0)},\cdots,\mA_\text{drop}^{(L-1)} \}$;
\begin{algorithmic}[1] 

\State Initialize $\mathbb{A} = \{ \}$
\State Compute $\delta p = (p_\text{max} - p_\text{min}) / (L - 1)$
\State Sample $\mA_\text{drop}^{(L-1)}\sim\text{FD}(\mA,\mX, p_\text{min}|\gE|)$ 
\State $\mathbb{A} = \{ \mA_\text{drop}^{(L-1)} \}$
\For {$l=L-2$ to $0$}
\State Sample $\mA_{\text{drop}}^{(l)} \sim \text{FD}(\mA_{\text{drop}}^{(l+1)}, \mX,\delta p|\gE|)$
\State $\mathbb{A} = \mathbb{A} \cup \{ \mA_\text{drop}^{(l)} \}$
\EndFor \\
\Return
$\mathbb{A}$

\end{algorithmic} 
\label{alg:ours}
\end{algorithm}

\textbf{Overall Framework.}
Our final version of DropEdge++, given the analyses above, is the integration of LID and FD. Specifically, we employ Eq.~(\ref{eq:lid}) but using
$\mA_{\text{drop}}^{(l)} \sim \text{FD}(\mA_{\text{drop}}^{(l+1)}, \mX,\delta p|\gE|)$. The detailed algorithmic flow is depicted in Algorithm~\ref{alg:ours}. Note that Theorem~\ref{th:mean} still holds under the FD sampler.    

\textbf{Complexity.} Before training, DropEdge++ requires a pre-computation of the feature similarity for the EB sampler, which is of complexity $O(|\gE|)$. During training, in each epoch, the LID sampler processes the sampling procedure in a layer-wise manner, resulting in a complexity of $O(L|\gE|)$. Since the sampling does not depend on the parameters in the GCNs and is not involved in the computation graph, the overhead brought by DropEdge++ is negligible compared with training the backbones, which is further verified in experiments in~\textsection~\ref{sec:time}.

\section{Experiments}

\subsection{Datasets}
To demonstrate the generality of our proposed method, we evaluate DropEdge++ on six node classification benchmarks, including both full- and semi- supervised tasks. 

\begin{table*}[htbp]
   \centering
   \caption{Datasets Statistics. ``Full'' and ``Semi'' denote full-supervised and semi-supervised style, respectively.}
     \begin{tabular}{l|rrrrc}
     \hline
     Dataset & Node  & Edge  & Feature & Class & Training / Validation / Testing \bigstrut[t]\\
     \hline
     Cora (Full)  & 2,708 & 5,429 & 1,433 & 7     & 1,208 / 500 / 1,000 \bigstrut[t]\\
     Citeseer (Full) & 3,327 & 4,732 & 3,703 & 6     & 1,812 / 500 / 1,000 \bigstrut[t]\\
     Pubmed (Full) & 19,717 & 44,338 & 500   & 3     & 18,217 / 500 / 1,000\bigstrut[t]\\
     \hline
     Cora (Semi)  & 2,708 & 5,429 & 1,433 & 7     & 140 / 500 / 1,000 \bigstrut[t]\\
     Citeseer (Semi) & 3,327 & 4,732 & 3,703 & 6     & 120 / 500 / 1,000 \bigstrut[t]\\
     Pubmed (Semi) & 19,717 & 44,338 & 500   & 3     & 60 / 500 / 1,000\bigstrut[t]\\
     \hline
     Coauthor CS & 18,333 & 81,894 & 6,805 & 15    & 300 / 450 / 17,583 \bigstrut[t]\\
     Coauthor Physics & 34,493 & 247,962 & 8,415 & 5     & 100 / 150 / 34,243 \bigstrut[t]\\
     Amazon Photos & 7,487 & 119,043 & 745   & 8     &  160 / 240 / 7,087 \bigstrut[t]\\
     \hline
     \end{tabular}%
   \label{tab:datasets}%
 \end{table*}

\textbf{Citation Networks}
The Cora, Citeseer, and Pubmed~\cite{sen2008collective} datasets are popular benchmarks for graph neural networks. The nodes represent papers while features are the bag-of-words representation of paper titles. Edges stand for the citation relationship between the papers, and the papers are labeled by the categories they belong to. On these three datasets, we leverage both the
semi-supervised fashion according to the standard split by~\cite{yang2016revisiting, DBLP:journals/corr/abs-1710-10903, Kipf2017, chen2020simple} and the full-supervised mode following~\cite{chen2018fastgcn,Huang2018,rong2020dropedge}.

\textbf{Co-author and Co-purchase Networks}
The Co-author CS and Co-author Physics are co-author networks~\cite{shchur2018pitfalls} based on the Microsoft Academic Graph with nodes representing authors and edges representing the co-authorship on a paper. Features are constructed by the keywords for the authors' papers, and labels reflect the authors' active fields. The Amazon Photos~\cite{shchur2018pitfalls} dataset is derived from Amazon co-purchase graph. Nodes represent goods, and edges represent close co-purchase relationship. Features are the product reviews in bag-of-words form, and labels indicate the category. 
Among these three datasets, we apply the semi-supervised tasks.
We randomly fix 20 nodes per class for training, 30 nodes per class for validation, and all the rest for testing, following~\cite{chen2019measuring,chang2020spectral}.

\subsection{Backbones}
We implement DropEdge++ on 4 popular backbones: generic GCN~\cite{Kipf2017}, ResGCN~\cite{he2016deep}, JKNet~\cite{xu2018representation}, and APPNP* that adds non-linearity to the propagation of APPNP~\cite{klicpera2018predict}.  We list these four propagation models in Appendix. For ResGCN~\cite{he2016deep}, the residual connections within layers are implemented as the adding operation. For JKNet~\cite{xu2018representation}, the final output embedding is derived from the concatenation of the hidden layer outputs. APPNP* is an extension of APPNP~\cite{klicpera2018predict} by adding non-linearity on the propagation model. A similar model to APPNP* has been discussed by~\cite{chen2020simple}, where the authors found that  adding both non-linearity and identity mapping to APPNP results in more desired performance for deep architectures. However, the identity mapping requires careful hyper-parameter tuning and we thus only apply non-linearity for simplicity and find it works well in our experiments. 


\textbf{Self-feature Modeling}
We also implement self-feature modeling~\cite{fout2017pro}
, which is a variant on the basic GCN propagation.
\begin{align}
\label{Eq:self}
    \mH^{(l+1)} = \sigma\left(\hat{\mA}\mH^{(l)}\mW^{(l)} + \mH^{(l)}\mW_{\text{self}}^{(l)} \right)
\end{align}

We can simply replace the convolution operation in GCN with Eq.~(\ref{Eq:self}) to obtain the self-feature modeling variant, which indeed could be well treated as a hyper-parameter for the potential enhancement on GCN, ResGCN, and JKNet.

\begin{figure*}[h!]
\centering
\includegraphics[width=0.32\textwidth]{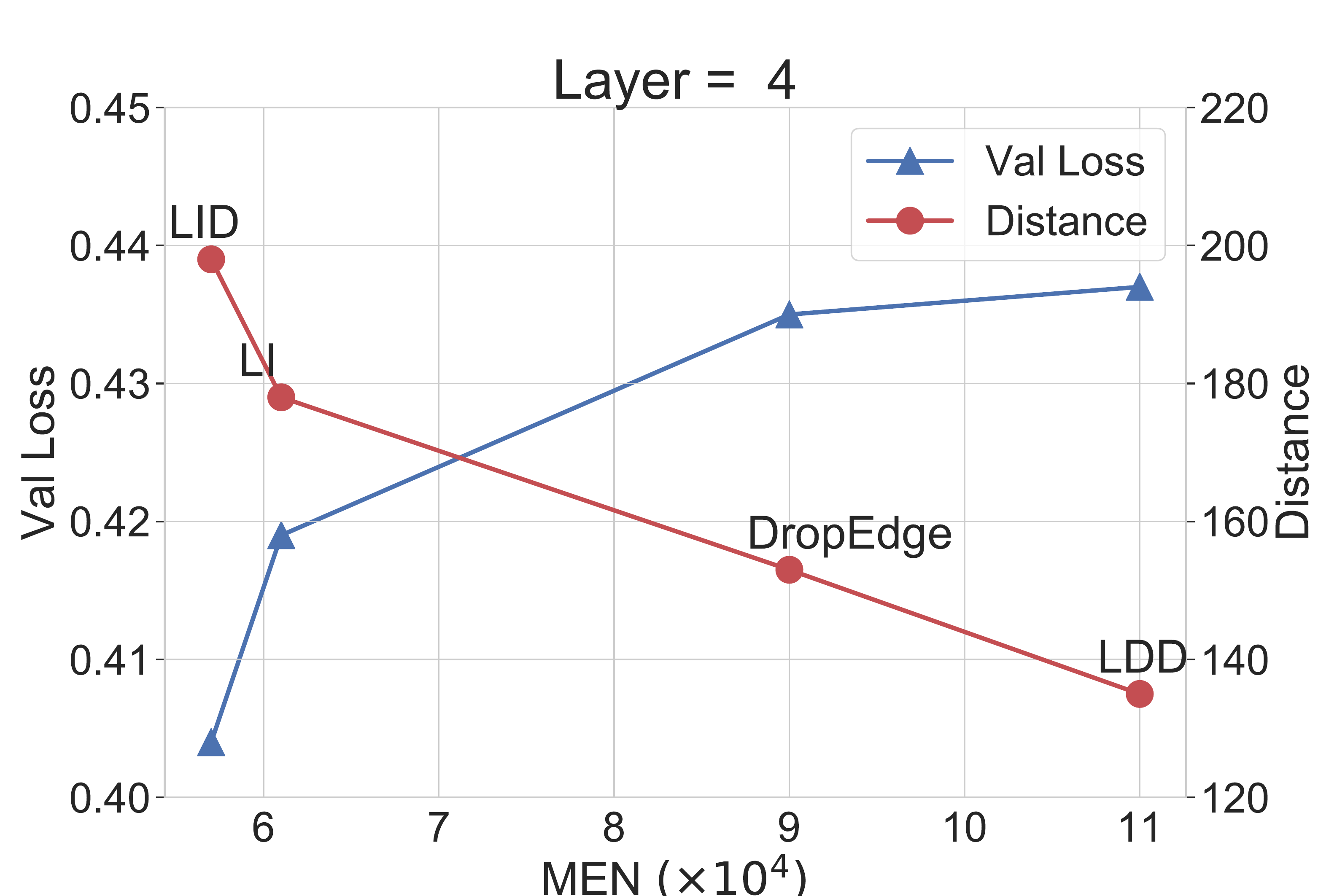}
\includegraphics[width=0.32\textwidth]{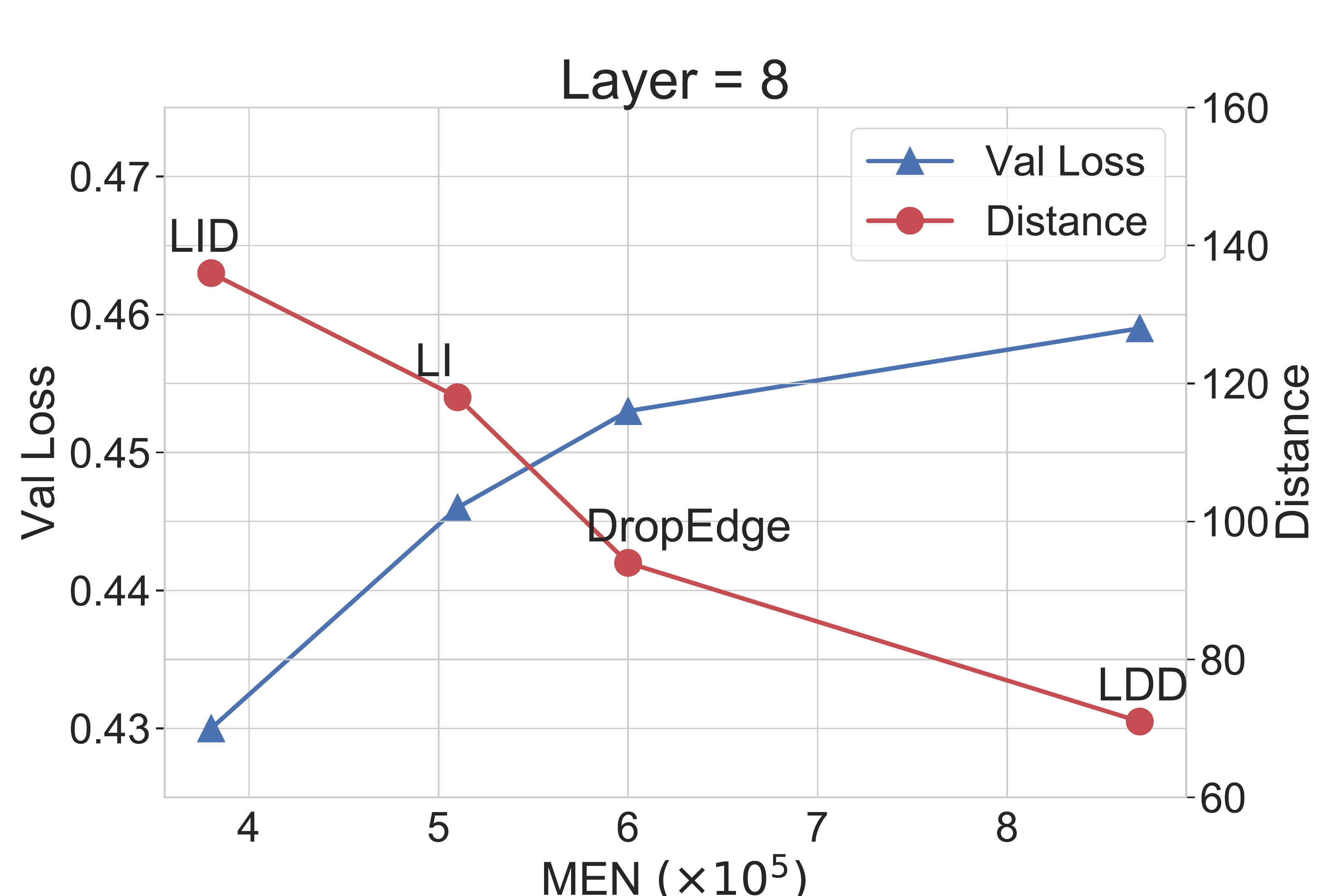}
\includegraphics[width=0.32\textwidth]{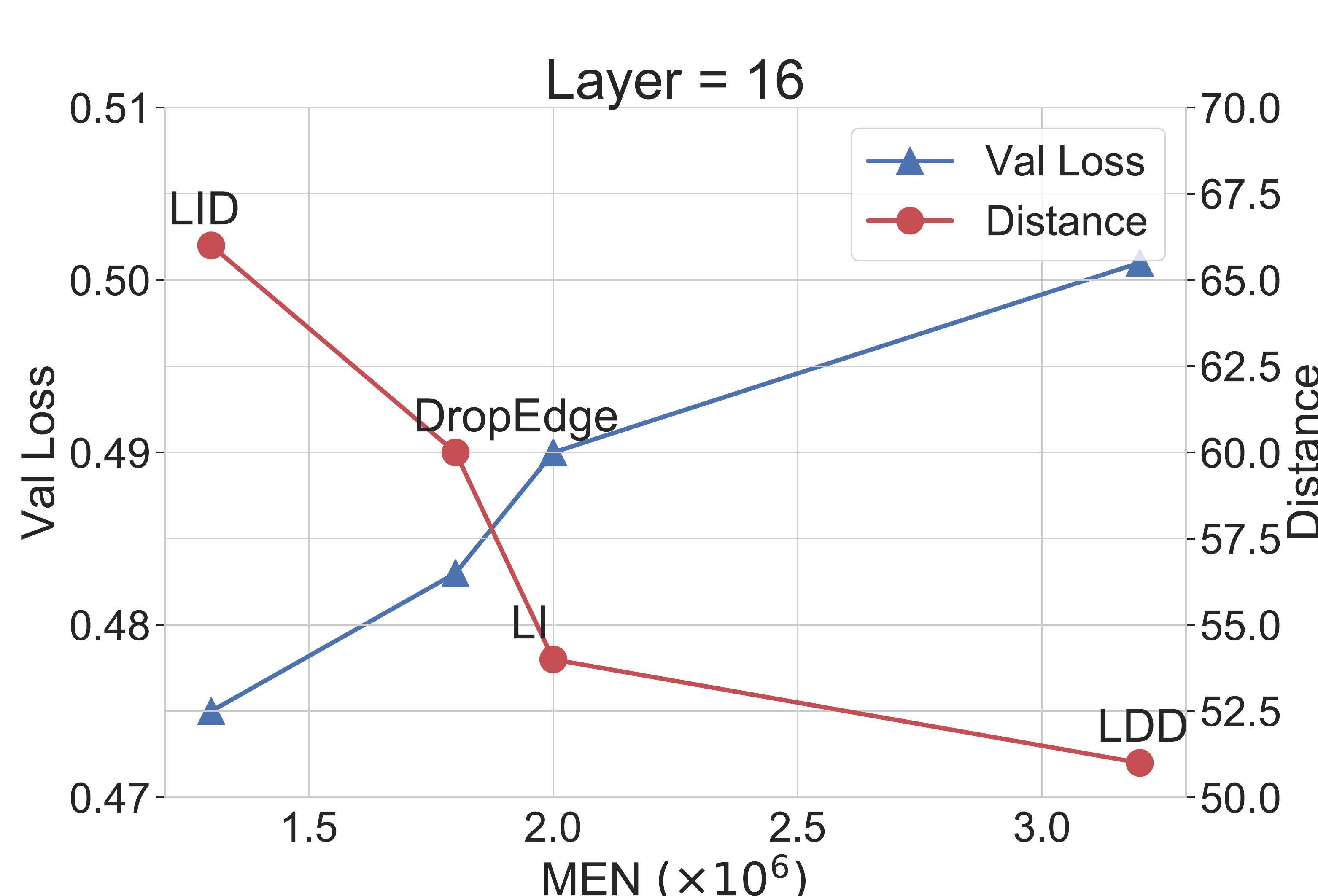}

\caption{MEN vs Validation Loss and Average Distance on full-supervised Cora, a more complete version of Fig.~\ref{fig.MEN_simple}.}
\label{fig.MEN_detail_analysis}
\end{figure*}

\begin{figure*}[t]
\centering
\includegraphics[width=0.32\textwidth]{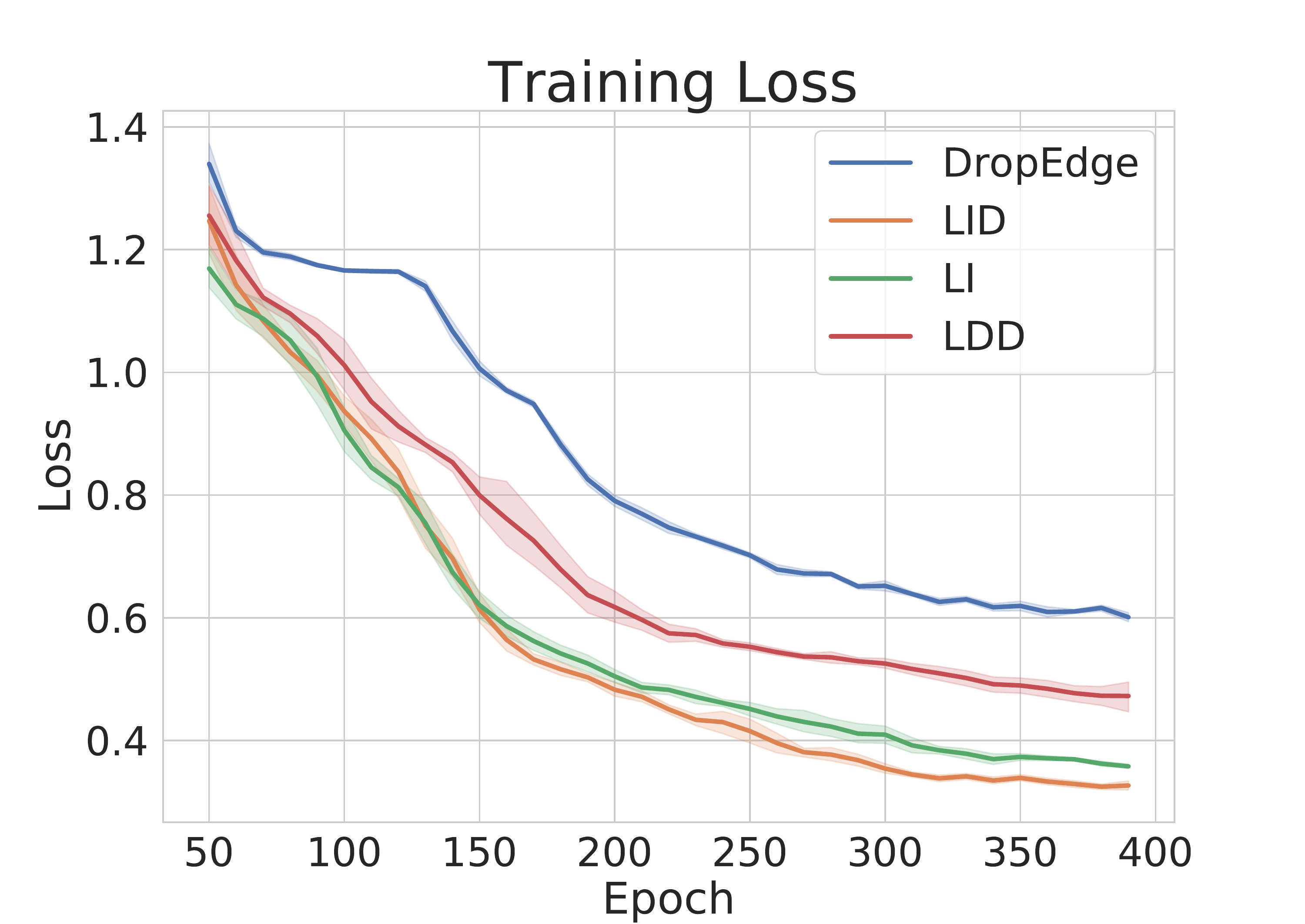}
\includegraphics[width=0.32\textwidth]{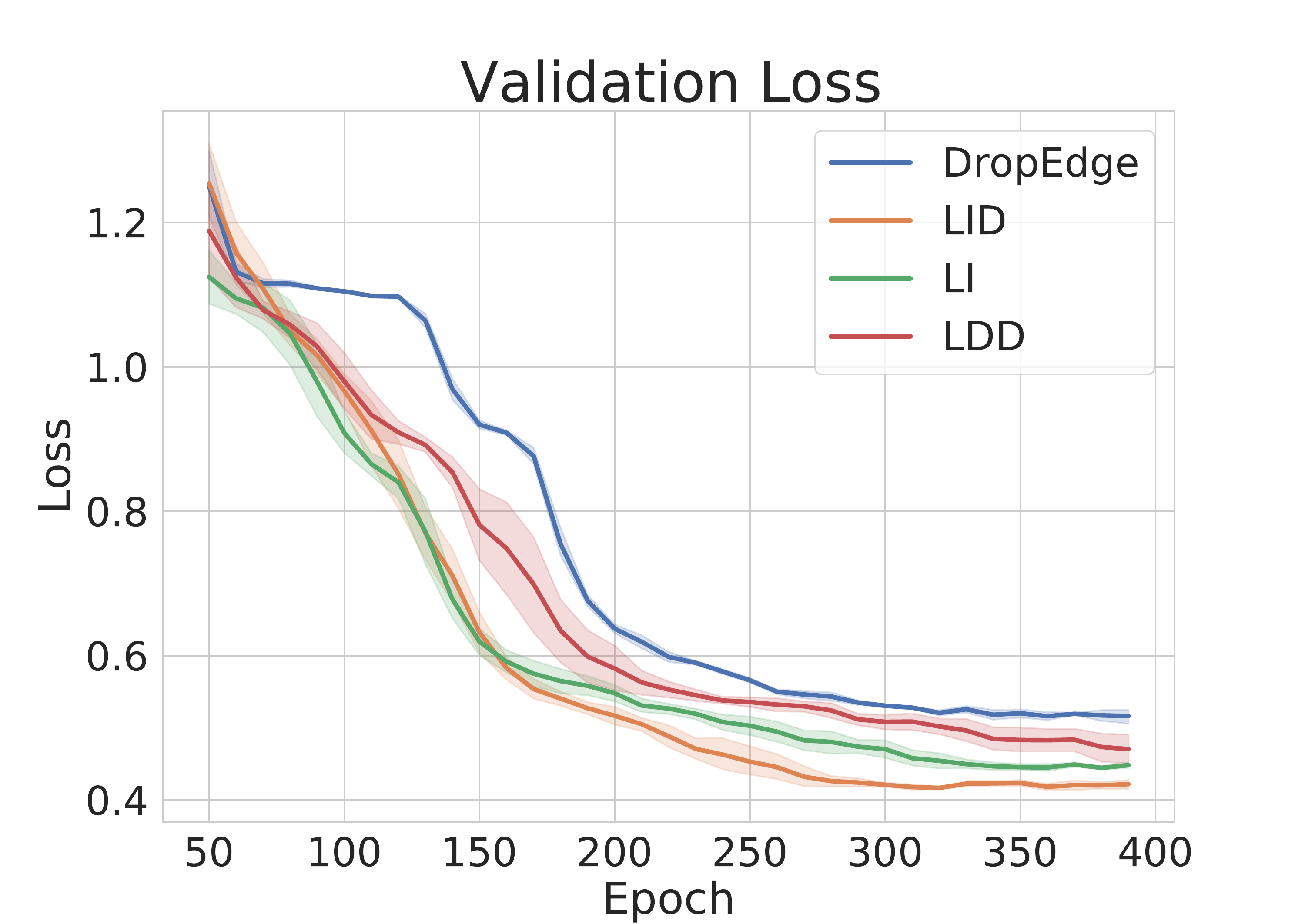}
\includegraphics[width=0.34\textwidth, height=0.22\textwidth]{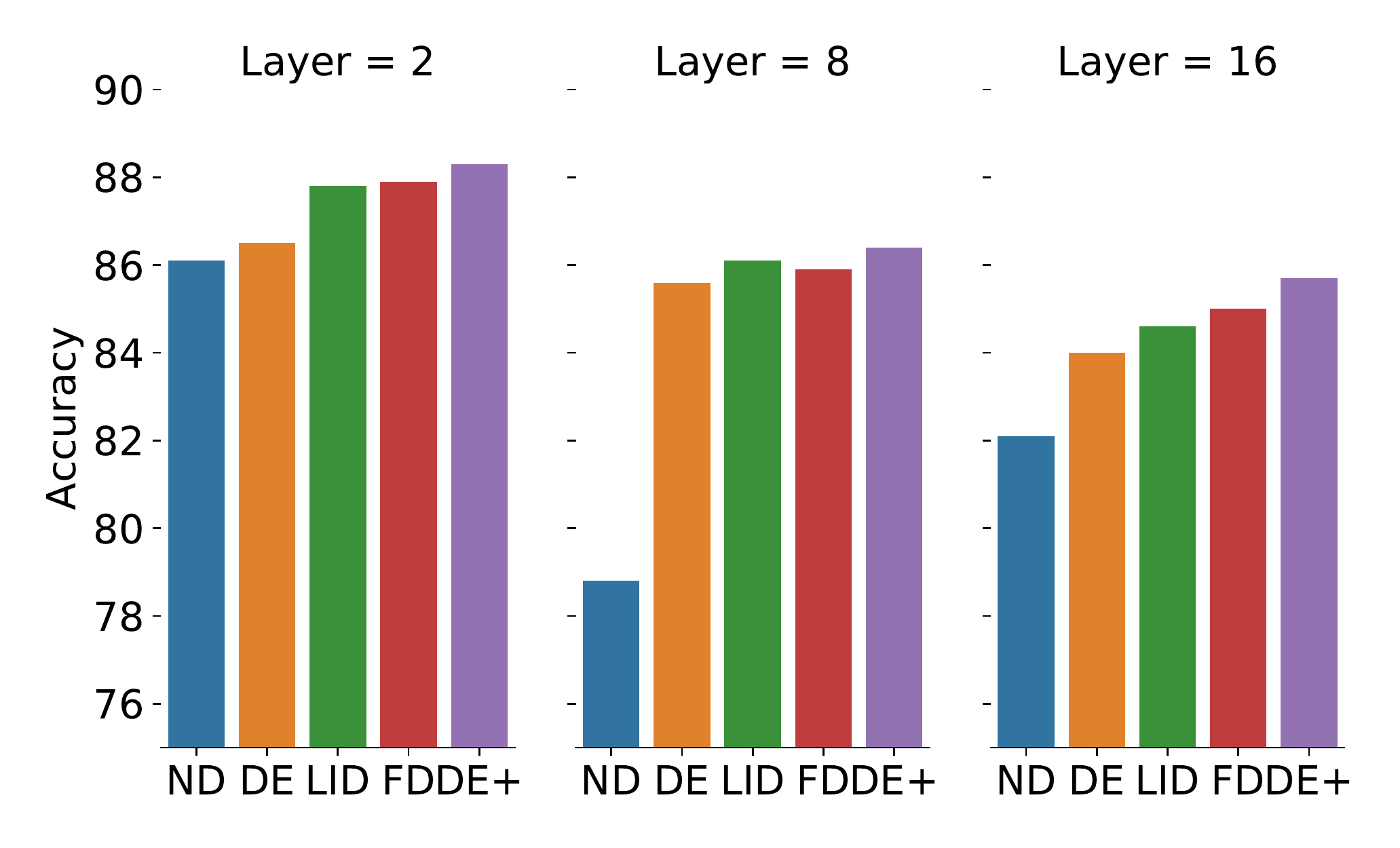}
\caption{ Analysis of LID on full-supervised Cora. 
(1) Left: training loss; Middle: validation loss. LID, LDD and LI share the same sampling rate $p_{\text{max}}$ and $p_{\text{min}}$, while the rate of DropEdge is set to be $(p_{\text{max}}+p_{\text{min}})/2$ for fair comparison. Colored area indicates one standard deviation over 10 runs. (2) Right: Accuracy comparison of different variants (ND: NoDrop, DE: DropEdge, DE+: DropEdge++).
}
\label{fig.compare_layer}
\end{figure*}
\begin{figure*}[h!]
\centering
\includegraphics[width=0.32\textwidth]{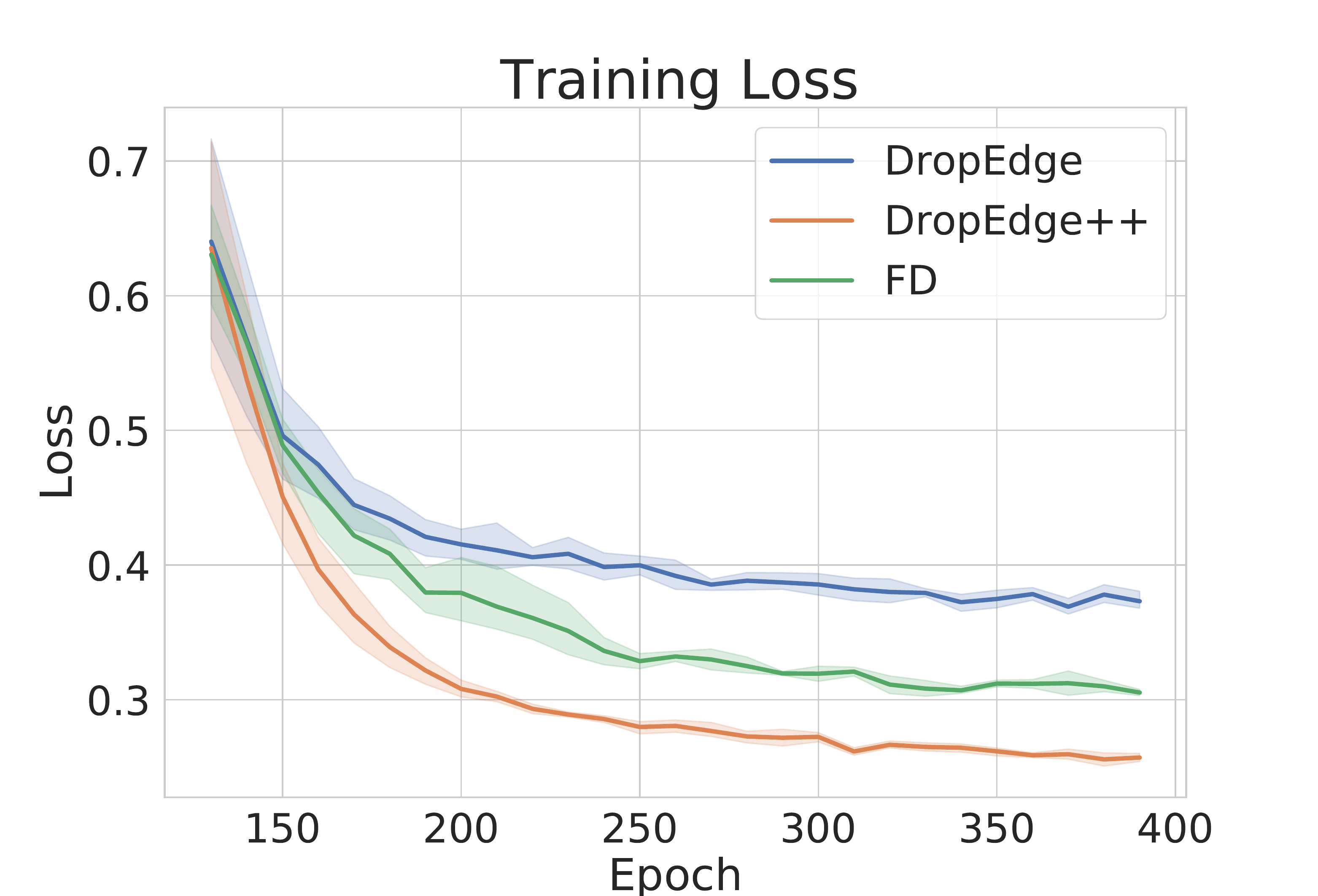}
\includegraphics[width=0.32\textwidth]{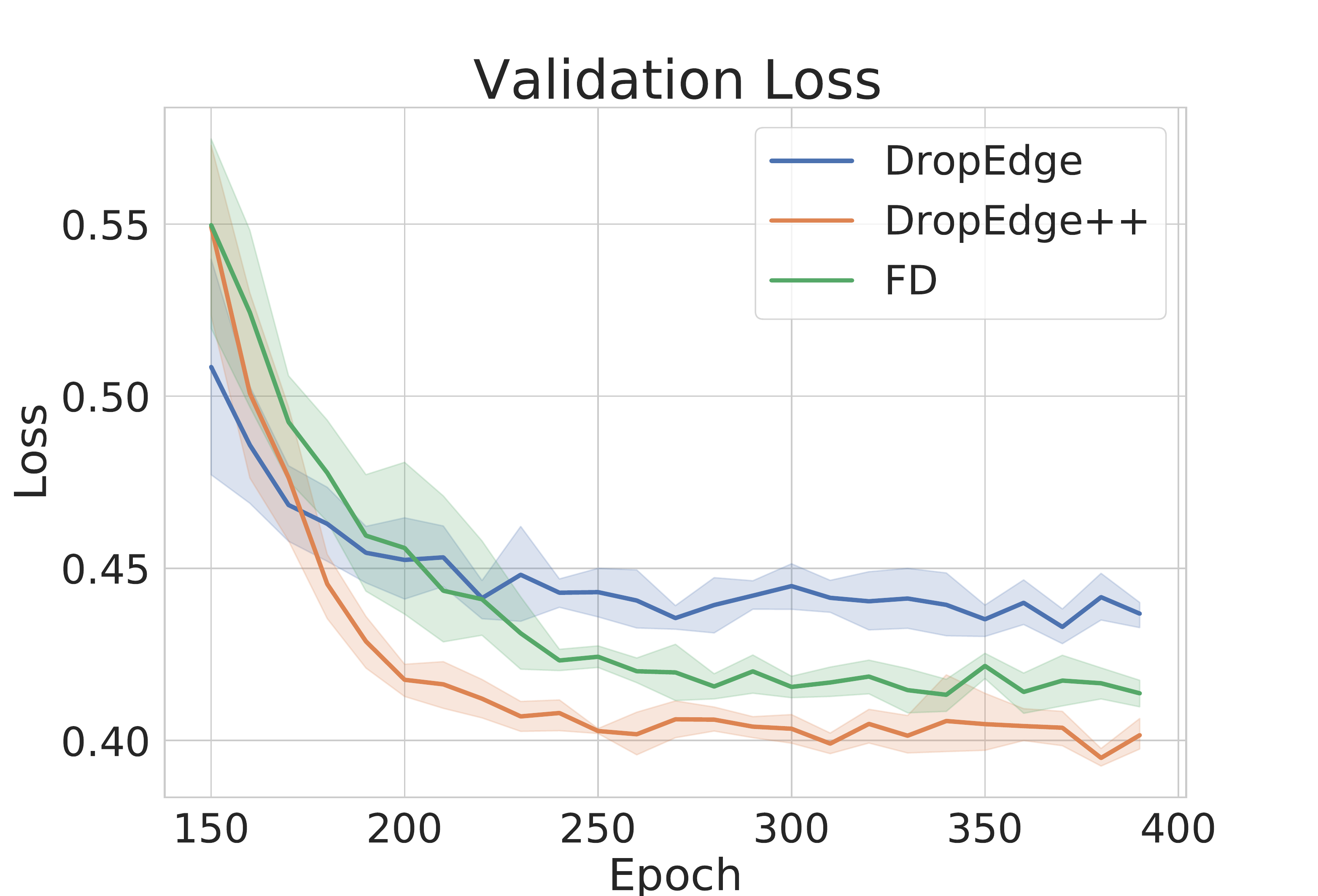}
\includegraphics[width=0.32\textwidth]{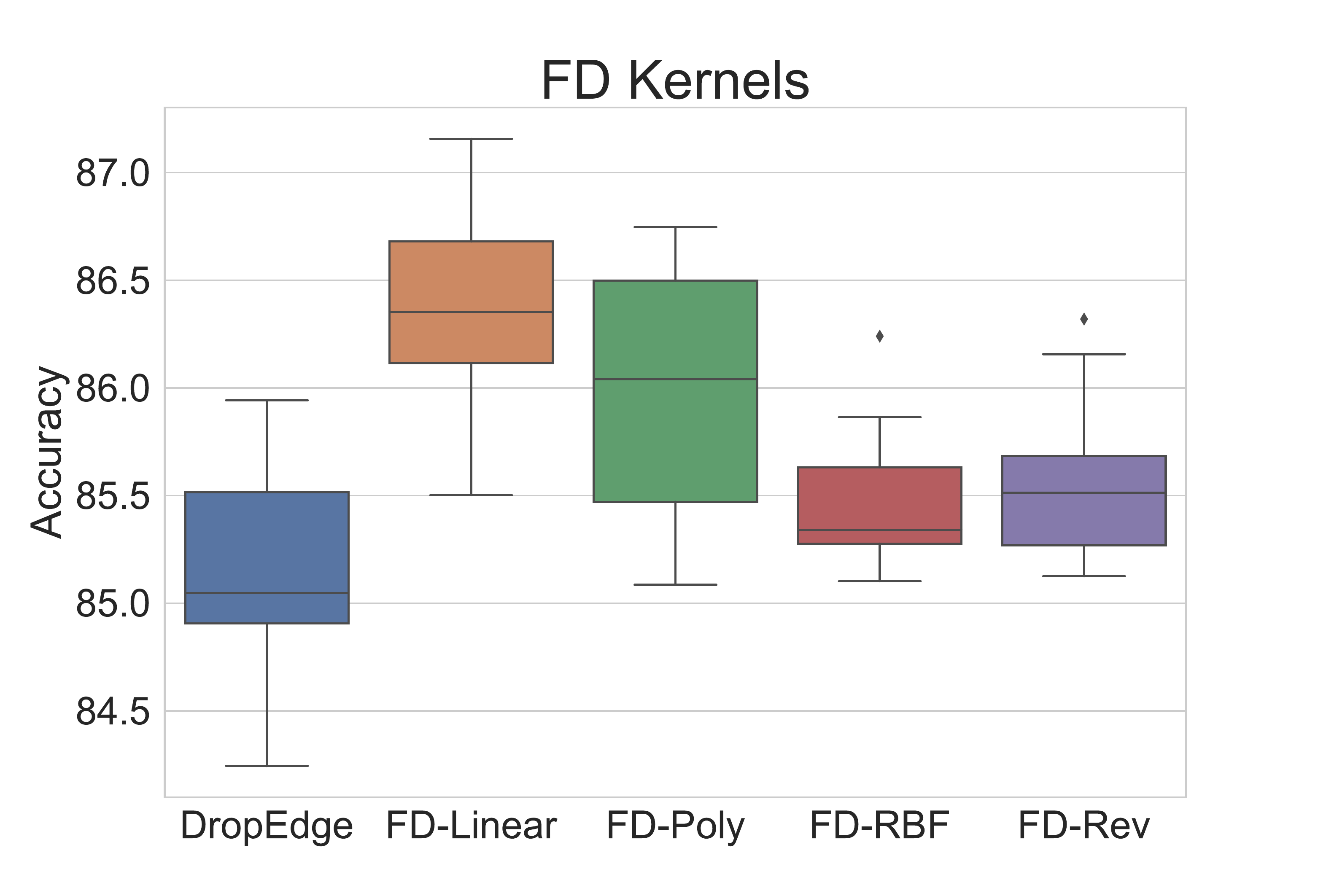}
\caption{Analysis of FD on full-supervised Cora. Left: training loss; Middle: validation loss; Right: FD with various kernels. FD is implemented with the Linear kernel by default.}
\label{fig.compare_FD}
\end{figure*}

\subsection{How does LID act compared with other variants?}
\label{sec:exp_ld}

We start by validating how the proposed metric MEN affects the extent of over-smoothing. Fig.~\ref{fig.MEN_detail_analysis} shows the correlation between MEN and the validation loss as well as the average distance of the hidden-layer outputs. The experiments are conducted on GCN with various layers on Cora dataset, with the four comparing sampling methods (LID, LDD, LI, and DropEdge). The results consistently show that MEN is negatively correlated with the average distance and positively related to the validation loss. In general, a lower MEN yields less over-smoothing given the same dropping rate, which aligns with our theory. Moreover, consistent with Theorem~\ref{th:mean}  and Theorem~\ref{th:mean-li}, LID produces the lowest MEN and validation loss among all compared candidates. Although it is non-trivial to compare $\text{MEN}_{\text{lid}}$ and $\text{MEN}_{\text{li}}$ theoretically, in practice we find that $\text{MEN}_{\text{lid}} \leq \text{MEN}_{\text{li}}$ holds based on our observation. 

We then give a detailed comparison in terms of the training and validation curves. Fig.~\ref{fig.compare_layer} displays the training curves and validation curves for DropEdge, LDD, LID, and LI on a 6-layer GCN under the same average dropping rate. Clearly, DropEdge gains inferior performance, indicating that employing shared adjacency matrices in different layers is not a good choice in this case. Particularly, LID achieves lower training and validation losses than LDD and LI, which verifies its better capability of preventing over-smoothing. 

\begin{table}[t!]
  \centering
  \setlength\tabcolsep{4pt}
  \caption{Testing accuracy (\%) on different backbones (Semi-supervised). ND: NoDrop, DE: DropEdge, DE+: DropEdge++.}
    \begin{tabular}{c|c|c|cccccc}
    \hline
    \multicolumn{3}{c|}{Layer}  & 2 & 4 & 8 & 16 & 32 & 64\bigstrut[t]\\
    \hline
    \multirow{12}[5]{*}{Cora} & \multirow{3}[2]{*}{GCN} & ND & 81.1  &80.4   &69.5  &64.9  & 60.3 & 28.7 \bigstrut[t] \\
          &       & DE & 82.8  & 82.0 &75.8  &75.7  &62.5 &49.5   \\
          &       & DE+ & \textbf{82.9} & \underline{\textbf{83.1}}  & \textbf{80.0} & \textbf{77.3} & \textbf{71.7} & \textbf{55.2}  \bigstrut[t]\\
\cline{2-9}          & \multirow{3}[2]{*}{ResGCN} & ND & -     & 78.8 & 75.6 & 72.2 &76.6 &61.1 \bigstrut[t]\\
          &       & DE & -     & \underline{\textbf{83.3}} &82.8  &82.7  &81.1 & 78.9  \\
          &       & DE+ & -     & 83.2 & \textbf{83.2} & \textbf{82.7} &\textbf{82.8} & \textbf{82.6} \bigstrut[t]\\
\cline{2-9}          & \multirow{3}[2]{*}{JKNet} & ND &  -    & 80.2 & 80.7  & 80.2& 81.1& 71.5   \bigstrut[t]\\
          &       & DE & -     & 83.3 & 82.6 & 83.0& 82.5&83.2   \\
          &       & DE+ & -     & \textbf{83.4} & \textbf{83.4}  & \textbf{83.6}  & \textbf{83.7} & \underline{\textbf{83.8}}  \bigstrut[t]\\
\cline{2-9}          & \multirow{3}[2]{*}{APPNP*} & ND &  -    & 83.2 & 82.9  & 82.7& 83.3& 82.2  \bigstrut[t]\\
          &       & DE & -     & 82.5 & 82.3 & 82.9 & 83.0 & 82.4  \\
          &       & DE+ & -     & \textbf{84.6} & \underline{\textbf{85.0}}  & \textbf{84.8}  & \textbf{84.7} & \textbf{84.9}  \bigstrut[t]\\
    \hline
    \multirow{12}[5]{*}{Citeseer} & \multirow{3}[2]{*}{GCN} & ND & 70.8 & 67.6 & 30.2  & 18.3 &25.0 &20.0  \bigstrut[t]\\
          &       & DE & 72.3  & 70.6  & 61.4  & 57.2 & 41.6& 34.4  \\
          &       & DE+ & \underline{\textbf{72.7}}  & \textbf{72.3} & \textbf{68.9}  & \textbf{64.2} & \textbf{50.1} & \textbf{37.0}  \bigstrut[t]\\
\cline{2-9}          & \multirow{3}[2]{*}{ResGCN} & ND & -     & 70.5  & 65.0 & 66.5& 62.6& 22.1  \bigstrut[t]\\
          &       & DE & -     & 72.2  & 71.6  & 70.1 &70.0 &65.1  \\
          &       & DE+ & -     & \textbf{72.9}  & \textbf{73.1}  & \textbf{73.3} & \textbf{73.0}& \underline{\textbf{73.6}}  \bigstrut[t]\\
\cline{2-9}          & \multirow{3}[2]{*}{JKNet} & ND & -     & 68.7 & 67.7 & 69.8 & 68.2& 63.4\bigstrut[t]\\
          &       & DE & -     & 72.6  & 71.8  & 72.6 & 70.8& 72.2  \\
          &       & DE+ & -     & \textbf{73.7} & \underline{\textbf{74.1}}  & \textbf{73.9} & \textbf{73.6}& \textbf{73.3}   \bigstrut[t]\\
\cline{2-9}          & \multirow{3}[2]{*}{APPNP*} & ND & -     & 72.3 & 71.8 & 73.1 & 72.7 & 72.1\bigstrut[t]\\
          &       & DE & -     & 72.4 & 72.5  & 72.3 & 73.5 & 72.7 \\
          &       & DE+ & -     & \textbf{74.5} & \textbf{74.8}  & \textbf{74.8} & \underline{\textbf{74.9}}& \textbf{74.7}   \bigstrut[t]\\
    \hline
    \multirow{12}[5]{*}{Pubmed} & \multirow{3}[2]{*}{GCN} & ND & 79.0  & 76.5 & 61.2 & 40.9 &22.4 & 35.3  \bigstrut[t]\\
          &       & DE & 79.6& 79.4 & 78.1  & 78.5 & 77.0&61.5  \\
          &       & DE+ & \textbf{79.8}  & \underline{\textbf{80.0}}  & \textbf{79.9}  & \textbf{78.8} & \textbf{77.5} & \textbf{74.7}    \bigstrut[t]\\
\cline{2-9}          & \multirow{3}[2]{*}{ResGCN} & ND & -     &78.6 &78.1  &75.5 &67.9 &66.9  \bigstrut[t]\\
          &       & DE & -     &78.8  &78.9   &78.0  &78.2 &76.9   \\
          &       & DE+ & -     & \textbf{80.2} & \underline{\textbf{80.6}}  & \textbf{80.1} & \textbf{79.9} & \textbf{79.9}   \bigstrut[t]\\
\cline{2-9}          & \multirow{3}[2]{*}{JKNet} & ND &  -    & 78.0  & 78.1  &72.6  &72.4 &74.5  \bigstrut[t]\\
          &       & DE & -     & 78.7  &78.7   & 79.1 &79.2 &78.9  \\
          &       & DE+ & -     & \textbf{80.0}  & \textbf{80.2}  & \textbf{80.3} & \underline{\textbf{80.4}} & \textbf{80.0}   \bigstrut[t] \\
\cline{2-9}          & \multirow{3}[2]{*}{APPNP*} & ND & -     & 79.5 & 79.9 & 79.8 & 80.2 & 80.3\bigstrut[t]\\
          &       & DE & -     & 80.1  & 80.4  &80.3 &80.0& 80.5  \\
          &       & DE+ & -     & \textbf{80.9} & \textbf{81.0}  & \textbf{81.3} & \textbf{81.6}& \underline{\textbf{81.8}}   \bigstrut[t]\\
    \hline
    \end{tabular}%
  \label{tab:semi_table}%
\end{table}%

\begin{table}[htbp]
  \centering
  \setlength\tabcolsep{4pt}
  \caption{Testing accuracy (\%) comparison on different backbones (Full-supervised). ND: NoDrop, DE: DropEdge, DE+: DropEdge++.}
    \begin{tabular}{c|c|c|cccccc}
    \hline
    \multicolumn{3}{c|}{Layer}  & 2 & 4 & 8 & 16 & 32 & 64 \bigstrut[t]\\
    \hline
    \multirow{12}[5]{*}{Cora} & \multirow{3}[2]{*}{GCN} & ND & 86.1  & 85.5  & 78.8  & 82.1  & 71.6 & 52.0 \bigstrut[t] \\
          &       & DE & 86.5  & 87.6  & 85.8  & 84.3 & 74.6& 53.2 \\
          &       & DE+ & \underline{\textbf{88.3}}  & \textbf{87.8}  & \textbf{86.4}  & \textbf{85.7}  & \textbf{82.9}& \textbf{74.9} \bigstrut[t]\\
\cline{2-9}          & \multirow{3}[2]{*}{ResGCN} & ND & -     & 86.0  & 85.4  & 85.3 & 85.1&79.8  \bigstrut[t]\\
          &       & DE & -     & 87.0  & 86.9  & 86.9 & 86.8& 84.8  \\
          &       & DE+ & -     &\underline{\textbf{88.5}}  & \textbf{88.4} & \textbf{88.2}  & \textbf{88.0}& \textbf{87.9}   \bigstrut[t]\\
\cline{2-9}          & \multirow{3}[2]{*}{JKNet} & ND &  -    & 86.9  & 86.7  & 86.2 & 87.1& 86.3  \bigstrut[t]\\
          &       & DE & -     & 87.7  & 87.8  & 88.0 &87.6 & 87.9  \\
          &       & DE+ & -     &\textbf{87.8}  & \textbf{88.3}  & \underline{\textbf{88.6}}  & \textbf{88.1} & \textbf{88.2}   \bigstrut[t]\\
\cline{2-9}          & \multirow{3}[2]{*}{APPNP*} & ND &  -    & 88.3  & 88.5 & 88.5 & 88.4 & 88.5  \bigstrut[t]\\
          &       & DE & -     & 87.4  & 88.4  & 88.3 &88.4 & 88.2  \\
          &       & DE+ & -     &\textbf{88.7}  & \textbf{89.0}  & \textbf{89.0}  & \underline{\textbf{89.2}} & \textbf{89.1}  \bigstrut[t]\\
    \hline
    \multirow{12}[5]{*}{Citeseer} & \multirow{3}[2]{*}{GCN} & ND & 75.9  & 76.7  & 74.6  & 65.2 & 59.2& 44.6 \bigstrut[t]\\
          &       & DE & 78.7  & 79.2  & 77.2  & 76.8 & 61.4& 45.6 \\
          &       & DE+ & \underline{\textbf{80.5}}  & \textbf{79.5}  & \textbf{78.3}  & \textbf{77.2} & \textbf{75.2} & \textbf{56.5}   \bigstrut[t]\\
\cline{2-9}          & \multirow{3}[2]{*}{ResGCN} & ND & -     & 78.9  & 77.8  & 78.2 & 74.4& 21.2 \bigstrut[t]\\
          &       & DE & -     & 78.8  & 78.8  & 79.4 & 77.9& 75.3 \\
          &       & DE+ & -     & \textbf{80.6}  & \textbf{80.8}  & \underline{\textbf{81.0}} & \textbf{80.3} & \textbf{80.2}   \bigstrut[t]\\
\cline{2-9}          & \multirow{3}[2]{*}{JKNet} & ND & -     & 79.1  & 79.2  & 78.8 & 71.7& 76.7 \bigstrut[t]\\
          &       & DE & -     & 80.2  & 80.2  & 80.1 &80.0 & 80.0 \\
          &       & DE+ & -     & \underline{\textbf{81.1}}  & \textbf{81.0}  & \textbf{80.5} & \textbf{80.2} & \textbf{80.8}   \bigstrut[t]\\
\cline{2-9}          & \multirow{3}[2]{*}{APPNP*} & ND & -     & 79.3  & 80.6  & 80.6 & 79.0 & 79.5 \bigstrut[t]\\
          &       & DE & -     & 79.5 & 80.3  & 79.9 &79.5 & 79.6\\
          &       & DE+ & -     & \textbf{80.9}  & \textbf{81.0}  & \underline{\textbf{81.2}} & \underline{\textbf{81.2}} & \textbf{81.1}   \bigstrut[t]\\
    \hline
    \multirow{12}[5]{*}{Pubmed} & \multirow{3}[2]{*}{GCN} & ND & 90.2  & 88.7  & 90.1  & 88.1 & 84.6& 79.7 \bigstrut[t]\\
          &       & DE & 91.2  & 91.3  & 90.9  & 90.3 & 86.2& 79.0 \\
          &       & DE+ & \underline{\textbf{91.8}}  & \textbf{91.0}  & \textbf{91.0}  & \textbf{90.7} & \textbf{90.5}& \textbf{83.1}   \bigstrut[t]\\
\cline{2-9}          & \multirow{3}[2]{*}{ResGCN} & ND & -     & 90.7  & 89.6  & 89.6 & 90.2& 87.9\bigstrut[t]\\
          &       & DE & -     & 90.7  & 90.5  & 91.0 & 91.1& 90.2  \\
          &       & DE+ & -     & \textbf{91.0}  & \textbf{91.6}  & \textbf{91.3} &\underline{\textbf{91.7}} & \textbf{91.2}   \bigstrut[t]\\
\cline{2-9}          & \multirow{3}[2]{*}{JKNet} & ND &  -    & 90.5  & 90.6  & 89.9 & 89.2&90.6  \bigstrut[t]\\
          &       & DE & -     & \textbf{91.3}  & 91.2  & 91.5 &91.3 &  \underline{\textbf{91.6}} \\
          &       & DE+ & -     & 91.2  & \underline{\textbf{91.6}}  & \underline{\textbf{91.6}} & \textbf{91.5} & \underline{\textbf{91.6}}  \bigstrut[t] \\
\cline{2-9}          & \multirow{3}[2]{*}{APPNP*} & ND &  -    & 90.5  & 90.3  & 90.6 & 90.5 & 90.2  \bigstrut[t]\\
          &       & DE & -     & 90.4  & 90.6  & 90.5 &90.6 &90.2 \\
          &       & DE+ & -     & \textbf{90.9}  & \underline{\textbf{91.2}}  & \textbf{91.0} & \textbf{90.9} & \textbf{90.9}  \bigstrut[t] \\
    \hline
    \end{tabular}%
  \label{tab:full_table}%
\end{table}

\begin{table*}[htbp]
  \centering
  \caption{Testing accuracy (\%) under semi-supervised settings. The number in parenthesis denotes the depth.}

    \begin{tabular}{c|c|c|c|c|c|c}
    \hline
    & Cora & Citeseer & Pubmed & Coauthor CS& Coauthor Physics & Amazon Photos\bigstrut[t]\\
    \hline
    GCN 
    & 81.5 & 71.1 & 79.0 & 89.80 & 92.80 & 90.60 \bigstrut[t]\\
    
 GAT 
 & 83.1 & 70.8 & 78.5 & 90.89 & 91.10 & 89.70 \bigstrut[t]\\
  APPNP 
  & 83.3 & 71.8 & 80.1 & 92.08 & 93.32 & 91.42  \bigstrut[t]\\
  GCNII
  & \textbf{85.5} & 73.4 & 80.2 & 91.34 & 93.43 & 91.05 \bigstrut[t] \\ 
  GCN (Drop)
  & 82.8 & 72.3 & 79.6 & 91.90 & 93.23 & 91.58 \bigstrut[t] \\
  JKNet 
  & 81.1 & 69.8 & 78.1 & 92.20 & 92.94 & 91.69  \bigstrut[t]\\
  GCN (AdaEdge) 
    & 82.3 & 69.7 & 77.4 & 90.30 & 93.00 & 91.50 \bigstrut[t]\\
    \hline\hline
    GCN w DE+ 
     & $\text{83.1}_{\pm \text{0.2(4)}}$  & $\text{72.8}_{\pm \text{0.4(2)}}$ & $\text{80.0}_{\pm \text{0.3(4)}}$ & $\text{92.96}_{\pm \text{0.04(2)}}$ &  $\textbf{\text{94.45}}_{\pm \text{0.10(4)}}$ & $\text{92.22}_{\pm \text{0.42(4)}}$ \bigstrut[t]\\
  ResGCN w DE+ 
      & $\text{83.2}_{\pm \text{0.4(4)}}$  & $\text{73.6}_{\pm \text{0.3(64)}}$     & $\text{80.6}_{\pm \text{0.5(8)}}$ & $\text{92.88}_{\pm \text{0.18(16)}}$ & $\text{94.04}_{\pm \text{0.11(16)}}$ & $\textbf{\text{93.06}}_{\pm \text{0.28(4)}}$\bigstrut[t]\\
  JKNet w DE+
    & $\text{83.8}_{\pm \text{0.4(64)}}$   & $\text{74.1}_{\pm \text{0.5(8)}}$   & $\text{80.4}_{\pm \text{0.2(32)}}$ & $\text{93.05}_{\pm \text{0.29(8)}}$  & $\text{94.21}_{\pm \text{0.16(8)}}$  & $\text{92.94}_{\pm \text{0.26(8)}}$\bigstrut[t]\\
  APPNP* w DE+ 
  & $\text{85.0}_{\pm \text{0.3(8)}}$  & $\textbf{\text{74.9}}_{\pm \text{0.4(32)}}$&  $\textbf{\text{81.8}}_{\pm \text{0.3(64)}}$ & $\textbf{\text{93.10}}_{\pm \text{0.15(8)}}$  & $\text{94.42}_{\pm \text{0.22(16)}}$  & $ \text{93.01}_{\pm \text{0.21(4)}}$ \bigstrut[t]\\
  \hline
    \end{tabular}%
  \label{tab:semi_sota}%

\end{table*}

\begin{table}[htbp]
  \centering
  \caption{Testing accuracy (\%) under full-supervised  settings. The number in parenthesis denotes the depth.}

    \begin{tabular}{c|c|c|c}
    \hline
    & Cora & Citeseer & Pubmed \bigstrut[t]\\
    \hline
    GCN 
    & 86.64 &79.34 & 90.22 \bigstrut[t]\\
    
 GAT 
 & 87.38 & 78.60 & 89.66 \bigstrut[t]\\
  FastGCN 
  & 85.00 & 77.60 & 88.00\bigstrut[t]\\
  AS-GCN 
  & 87.44 & 79.66 & 90.60 \bigstrut[t]\\
  APPNP 
  & 87.87 & 79.82 & 90.36 \bigstrut[t]\\
  GCNII
  & 88.25 & 79.92 & 90.05 \bigstrut[t]\\
  GCN (Drop)
  & 87.60 & 79.20 & 91.30 \bigstrut[t] \\
  GraphSAGE 
  & 82.20 & 71.40 & 87.10 \bigstrut[t]\\
    \hline\hline
    GCN w DE+  
     & $\text{88.30}_{\pm \text{0.16(2)}}$  & $\text{80.52}_{\pm \text{0.14(2)}}$ & $\textbf{\text{91.76}}_{\pm \text{0.16(2)}}$  \bigstrut[t]\\
   ResGCN w DE+  
      & $\text{88.48}_{\pm \text{0.32(4)}}$  & $\text{80.95}_{\pm \text{0.24(16)}}$     & $\text{91.66}_{\pm \text{0.22(32)}}$ \bigstrut[t]\\
  JKNet w DE+ 
    & $\text{88.55}_{\pm \text{0.15(16)}}$   & $\text{81.06}_{\pm \text{0.40(4)}}$   & $\text{91.57}_{\pm \text{0.31(8)}}$ \bigstrut[t]\\
   APPNP* w DE+  
   & $\textbf{\text{89.22}}_{\pm \text{0.11(32)}}$  & $\textbf{\text{81.21}}_{\pm \text{0.19(16)}}$&  $\text{91.20}_{\pm \text{0.06(8)}}$ \bigstrut[t]\\
   \hline
    \end{tabular}%
  \label{tab:full_sota}%

\end{table}

\subsection{How does FD act compared with DropEdge?}

\label{sec:FD_compare}
Theorem~\ref{th:FD} has proved that FD improves the expressivity of deep GCNs by associating the convergence subspace with the input features. To justify if this is the case in practice, we carry out FD (with the Linear kernel) and DropEdge upon 6-layer GCN under the same dropping rate. Apparently in Fig.~\ref{fig.compare_FD}, our FD sampler outperforms DropEdge in terms of both the training and validation curves. As an oracle reference, we also involve the loss of DropEdge++ that is equipped with FD and LID. DropEdge++ achieves the lowest loss than other variants, showing the compatibility of FD and LID. 
Except for the Linear kernel, we also test other kinds of kernels including the Polynomial kernel, RBF kernel for FD, which are all positively correlated with the feature-similarity. The specifications of these kernels are deferred to Appendix. To further validate our design, we also include a reverse version of FD, \emph{i.e.}, $K_{\text{Rev}} = 1 - K_{\text{Linear}}$. The results in the right sub-figure of Fig.~\ref{fig.compare_FD} indicates that all kernels are better than DropEdge, while Polynomial and Linear are most desirable in this case. We will use the Linear kernel in what follows by default. 

As a more complete ablation study, the right sub-figure in Fig.~\ref{fig.compare_layer} compares NoDrop, DropEdge++, and its two components, LID and FD. In accordance with our analysis, both FD and LID improve the performance over the generic GCN as well as DropEdge, while DropEdge++ achieves the best performance with the help of its two structure-aware samplers.

\subsection{The performance of DropEdge++ on various backbones of different layers.}
\label{sec:compare_deep}
We compare the testing accuracy of NoDrop, DropEdge and DropEdge++ with various layers (from 2 to 64) and backbones. 
More details about the design of hyper-parameters (such as $p_{\text{min}}$ and $p_{\text{max}}$) are provided in Appendix. Table~\ref{tab:semi_table} and Table~\ref{tab:full_table} display the results on Cora, Citeseer and Pubmed under semi- and full- supervised settings, respectively. We reuse the best metrics reported in~\cite{rong2020dropedge} for NoDrop and DropEdge. The results on the other three datasets are deferred to Appendix.

We have these findings:

\textbf{1.}
It is clearly shown that DropEdge++ is consistently effective in boosting the performance over NoDrop and DropEdge on all backbones with varying layers. 

\textbf{2.}
As for GCN, even equipped with DropEdge++, deep models are generally worse than the shallow counterparts under the full-supervised setting. Even so, DropEdge++, as a robust sampling technique, still enhances the performance of the 2-layer models over NoDrop remarkably; for example, the accuracy is increased from 86.1 to 88.3 on full-supervised Cora. Moreover, when deep GCN fails and delivers much worse performance, conducting DropEdge++ yields promising results over NoDrop and DropEdge, \emph{e.g.} 74.9 vs 53.2 with 64-layer GCN on full-supervised Cora.

\textbf{3.}
When using ResGCN, JKNet or APPNP* as the backbone, DropEdge++ generally holds steady with the increase of network depth and nearly gains the best number when the depth is beyond 4. For instance, when equipped with DropEdge++, the 8-layer JKNet achieves 74.1 and 32-layer APPNP* achieves 74.9 on semi-supervised Citeseer.

\subsection{Comparison with SOTAs}
We choose the best performance for each backbone with DropEdge++, and contrast them with State of the Arts (SOTAs) in Table \ref{tab:semi_sota} and Table \ref{tab:full_sota}, including GCN, GCN with DropEdge, GAT, AS-GCN, FastGCN, APPNP, GraphSAGE, GCNII, and AdaEdge~\cite{chen2019measuring}. We reuse the results of SOTAs reported in ~\cite{rong2020dropedge, chen2020simple} on Cora, Citeseer and Pubmed, and those on the other three datasets from~\cite{chen2019measuring,klicpera2019diffusion,chang2020spectral}. The means and standard deviations are acquired through 10 runs.
Obviously, our DropEdge++ obtains enhancement against SOTAs with an average improvement of more than 1\% on nearly all the benchmarks, which is significant given the challenge of these benchmarks. Although the performance on semi-supervised Cora is worse than GCNII possibly due to limited training labels, DropEdge++ outperforms all SOTAs on full-supervised Cora. Moreover, the best accuracy for most models with DropEdge++ is attained under the depth beyond 2, which again verifies the impact of DropEdge++ on formulating deep networks.

\begin{table}[htbp]
  \centering
  \caption{Average time consumed per epoch on Cora (in seconds).}

    \begin{tabular}{cccc}
    \hline
        & NoDrop  & DropEdge  & DropEdge++ \bigstrut[t]\\
      \hline
    Sampling   & 0.0015     & 0.0022  & 0.0043 \bigstrut[t]  \\
    Training   & 0.0125     & 0.0115   & 0.0117 \bigstrut[t]  \\
    Total   & 0.0140     & 0.0137   & 0.0160  \bigstrut[t] \\
    \hline
    \end{tabular}%
    \label{tab:time}
\end{table}

\subsection{Time Complexity} 
\label{sec:time}
Although we adopt a layer-wise sampling approach, the time consumption of DropEdge++ is not heavily burdened compared with DropEdge, since the kernel matrix in FD is pre-computed before training. As displayed in Table~\ref{tab:time} with GCN on Cora dataset, it only occupies a small percentage of the overall running time for sampling, making our DropEdge++ competitively efficient as DropEdge.

\subsection{Mean-Edge-Number}
\label{app:men}

\begin{figure}[t!]
\centering
\includegraphics[width=0.24\textwidth]{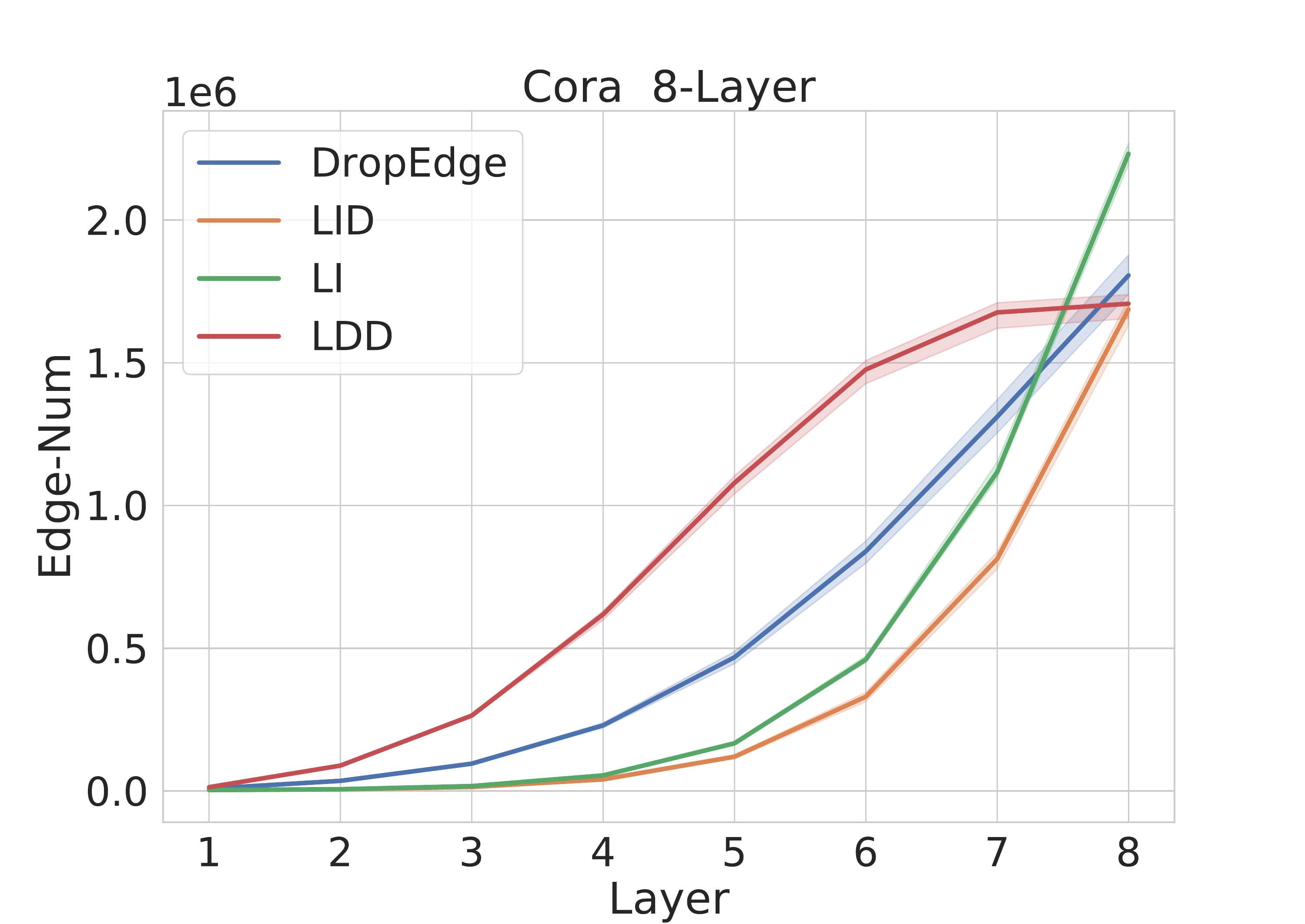}
\includegraphics[width=0.24\textwidth]{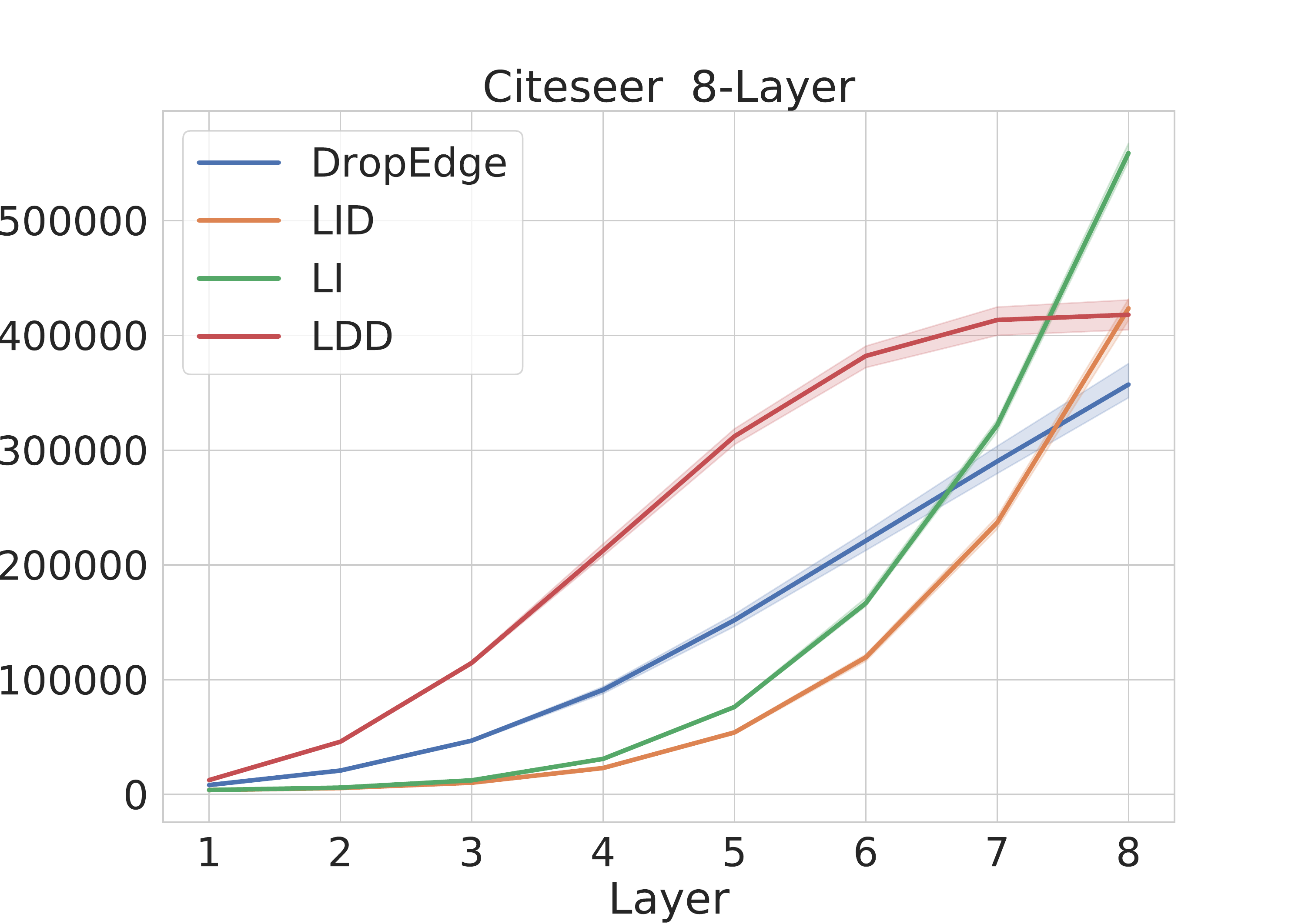}
\caption{ The edge-number ($|\prod_{i=0}^{l}\mA_{\text{drop}}^{(i)}|$) comparison on Cora and Citeseer. }
\label{fig:men_cora}
\end{figure}

\begin{table}[t!]
  \centering
  \caption{The value of MEN on Cora and Citeseer with varying layers and sampling methods (10 runs). }
    \begin{tabular}{c|c|cccccc}
    \hline
    \multicolumn{2}{c|}{Layer}        & 4  & 8  & 16\\
    \hline
    \multirow{4}[2]{*}{Cora} & DropEdge & $9.0\times10^4$ &  $6.0\times10^5$ & $1.8\times10^6$ \bigstrut[t]\\
          & LID   & $\mathbf{5.7\times10^4}$  & $\mathbf{3.8\times10^5}$ &  $\mathbf{1.3\times10^6}$ \\
          & LI    & $6.1\times10^4$ & $5.1\times10^5$ & $2.0\times10^6$ \\
          & LDD   & $1.1\times10^5$ & $8.7\times10^5$ & $3.2\times10^6$ \bigstrut[t]\\
    \hline
    \multirow{4}[2]{*}{Citeseer} & DropEdge & $4.2\times10^4$ &  $1.5\times10^5$ &  $4.2\times10^5$ \bigstrut[t]\\
          & LID   & $\mathbf{2.9\times10^4}$  & $\mathbf{1.1\times10^5}$ & $\mathbf{3.5\times10^5}$ \\
          & LI    & $3.1\times10^4$  & $1.5\times10^5$  & $6.8\times10^5$ \\
          & LDD   & $5.1\times10^4$  & $2.4\times10^5$  & $9.7\times10^5$ \bigstrut[t]\\
    \hline
    \end{tabular}%
  \label{tab:men_values}%
\end{table}%

Fig.~\ref{fig:men_cora} displays the metric, edge-number ($|\prod_{i=0}^{l}\mA_{\text{drop}}^{(i)}|$), as the vertical axis and the number of layers as the horizontal axis on various layers on Cora and Citeseer, respectively, with $p_{\text{min}} = 0.05, p_{\text{max}} = 1.0, p = (p_{\text{min}} + p_{\text{max}}) / 2$. Colored area indicates the standard deviation over 10 runs. Table~\ref{tab:men_values} shows the values of MEN on Cora and Citeseer with different number of layers and sampling methods. Clearly, our LID sampler is very effective in reducing the MEN and is beneficial to relieving over-smoothing consequently.

\section{Conclusion}
In this paper, we investigate over-smoothing on GCNs and develop two effective samplers, \emph{i.e.} the layer-increasingly-dependent sampler and the feature-dependent sampler. The layer-increasingly-dependent sampler is able to decrease the metric, mean-edge-number (MEN), and thus further reduces over-smoothing while the feature-dependent sampler further improves the expressivity of deep GCNs. Our overall framework DropEdge++ integrates these two structure-aware samplers and is demonstrated to be effective on various backbones of both deep and shallow models towards both full- and semi- supervised node classification. 

\textbf{Limitations and future works}
The feature-dependent sampler requires to specify a kernel function to compute the similarity between connected nodes. There could also be extra hyper-parameters to optimize depending on the type of the kernel, e.g., the bandwidth and scale of the RBF kernel. As future work, it would be interesting to incorporate deep kernel learning to parameterize the kernel in a unified manner, and also to enable back-propagation of the gradient into the kernel parameters via techniques like Gumbel-Softmax~\cite{jang2017categorical} for discrete sampling. Besides, instead of computing the feature similarity purely based on the input node feature, it would be more flexible to derive the similarity from some refined features acquired or distilled via feature engineering or graph representation learning frameworks.

\section*{Acknowledgment}

This work was jointly supported by the following projects: the Scientific Innovation 2030 Major Project for New Generation of AI under Grant NO. 2020AAA0107300, Ministry of Science and Technology of the People's Republic of China; the National Natural Science Foundation of China (No.62006137); Tencent AI Lab Rhino-Bird Visiting Scholars Program (VS2022TEG001); Beijing Outstanding Young
Scientist Program (No. BJJWZYJH012019100020098); Scientific Research Fund Project of Renmin University of China (Start-up Fund Project for New Teachers, No. 23XNKJ19).

\appendices

\begin{table}[t!]
  \centering
  \caption{The propagation models.}
  \label{tab:prop}
    \begin{tabular}{l|l}
    \hline
    Backbone & Propagation \bigstrut[t]\\
    \hline
    GCN   & $\mH^{(l+1)} = \sigma\left(\hat{\mA}\mH^{(l)}\mW^{(l)}\right)$ \bigstrut[t]\\
    \hline
    ResGCN & $\mH^{(l+1)} = \sigma\left(\hat{\mA}\mH^{(l)}\mW^{(l)}\right) + \mH^{(l)} $ \bigstrut[t]\\
    \hline
    JKNet &  \makecell[l]{$\mH^{(l+1)} = \sigma\left(\hat{\mA}\mH^{(l)}\mW^{(l)}\right) $, \\
    $\mH = ||_{l=0}^L \mH^{(l)}$}
    \bigstrut[t]\\
    \hline
    APPNP* & \makecell[l]{$\mZ ^{(0)}  = f_{\theta}(\mX)$, \\ $\mZ ^ {(l+1)}  = \sigma((1 - \alpha)\hat{\mA}^{(l)}\mZ^{(l)} + \alpha\mZ^{(0)})$} \bigstrut[t]\\
    \hline
    \end{tabular}%
\end{table}%

\begin{table}[t!]
    \centering
    \caption{Hyper-parameter descriptions.}
    \begin{tabular}{l|l}
\hline
Hyper-parameter & Description \bigstrut[t]\\
\hline
lr    & the learning rate \bigstrut[t]\\
$L_2$ & $L_2$ regularization \bigstrut[t]\\
$p_{\text{min}}, p_{\text{max}}'$     & the min/max sampling rate \bigstrut[t]\\
hidden & the dimension of hidden units \bigstrut[t]\\
$\alpha$ & the coefficient in APPNP* \bigstrut[t]\\
dropout & the dropout rate \bigstrut[t]\\
normalization & the normalization method in Table~\ref{tab:normalization} \bigstrut[t]\\
Loop &  self-feature modeling \bigstrut[t]\\
BN & batch normalization \bigstrut[t]\\
\hline
\end{tabular}%
\label{tab:hyper}

\end{table}

\begin{table}[t!]
    \centering
    \caption{The applied normalization methods.}
\begin{tabular}{l|l}
\hline
Abbr. & Name and Mathematical Form \bigstrut[t] \\
\hline
NA & \makecell[l]{Normlized Adjacency \\ $\mD ^{-1/2}\mA\mD^{-1/2}$} \bigstrut[t]\\
\hline
FOG & \makecell[l]{First-order GCN \\ $\mI + \mD^{-1/2}\mA\mD^{-1/2}$} \bigstrut[t]\\
\hline
AN & \makecell[l]{Augmented Normalized Adjacency \\ $(\mD + \mI)^{-1/2} ( \mA + \mI ) (\mD + \mI)^{-1/2}$} \bigstrut[t]\\
\hline
ANS & \makecell[l]{Augmented Normalized  Adjacency with Self-Loop \\$\mI + (\mD + \mI)^{-1/2} (\mA + \mI) (\mD + \mI)^{-1/2}$ }\\
\hline
ARW & \makecell[l]{Augmented Random Walk\\ $(\mD + \mI)^{-1}(\mA + \mI)$} \\
\hline
\end{tabular}%
\label{tab:normalization}
\end{table}

\begin{figure*}[t!]
\centering
\includegraphics[width=0.24\textwidth]{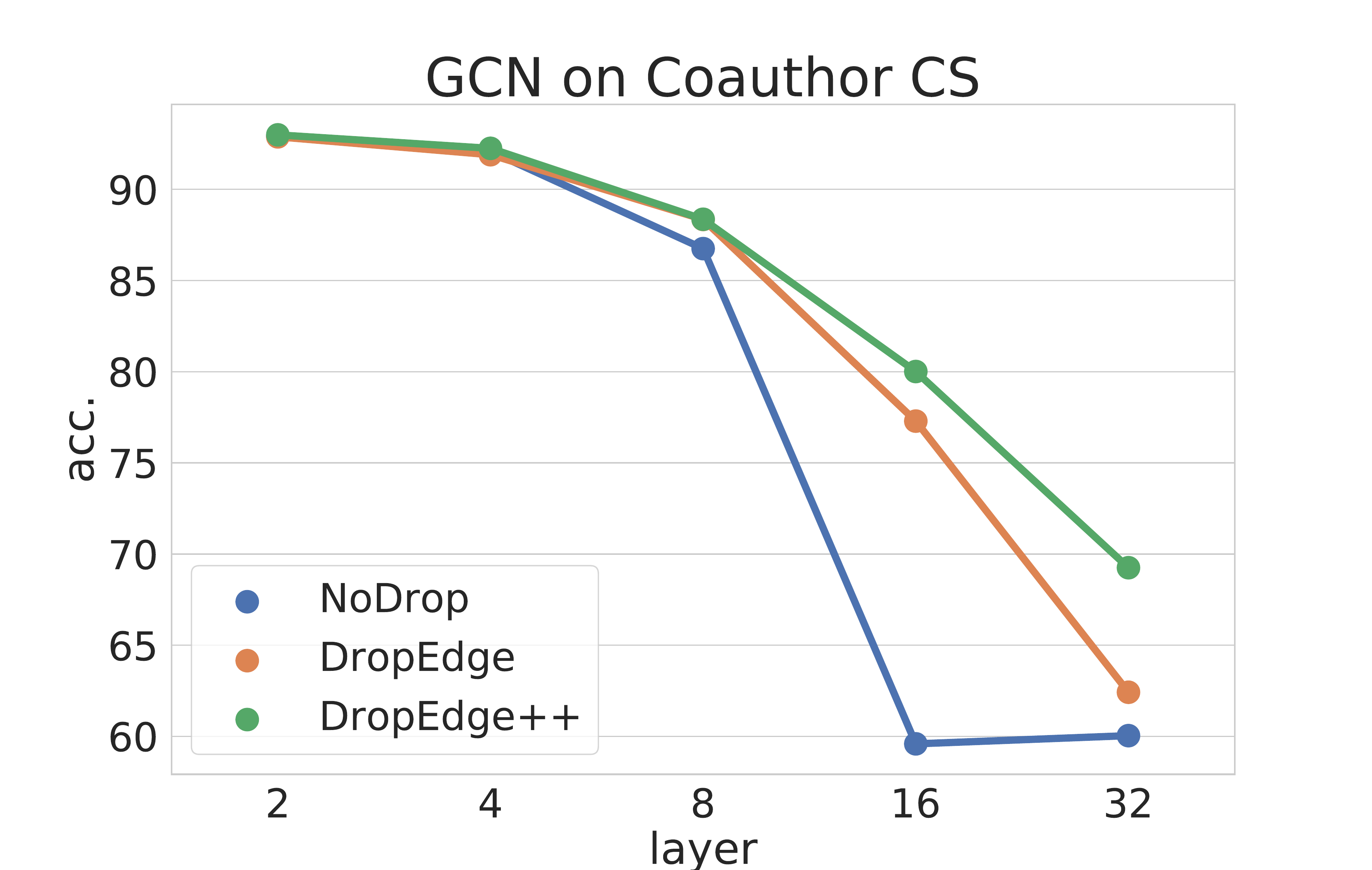}
\includegraphics[width=0.24\textwidth]{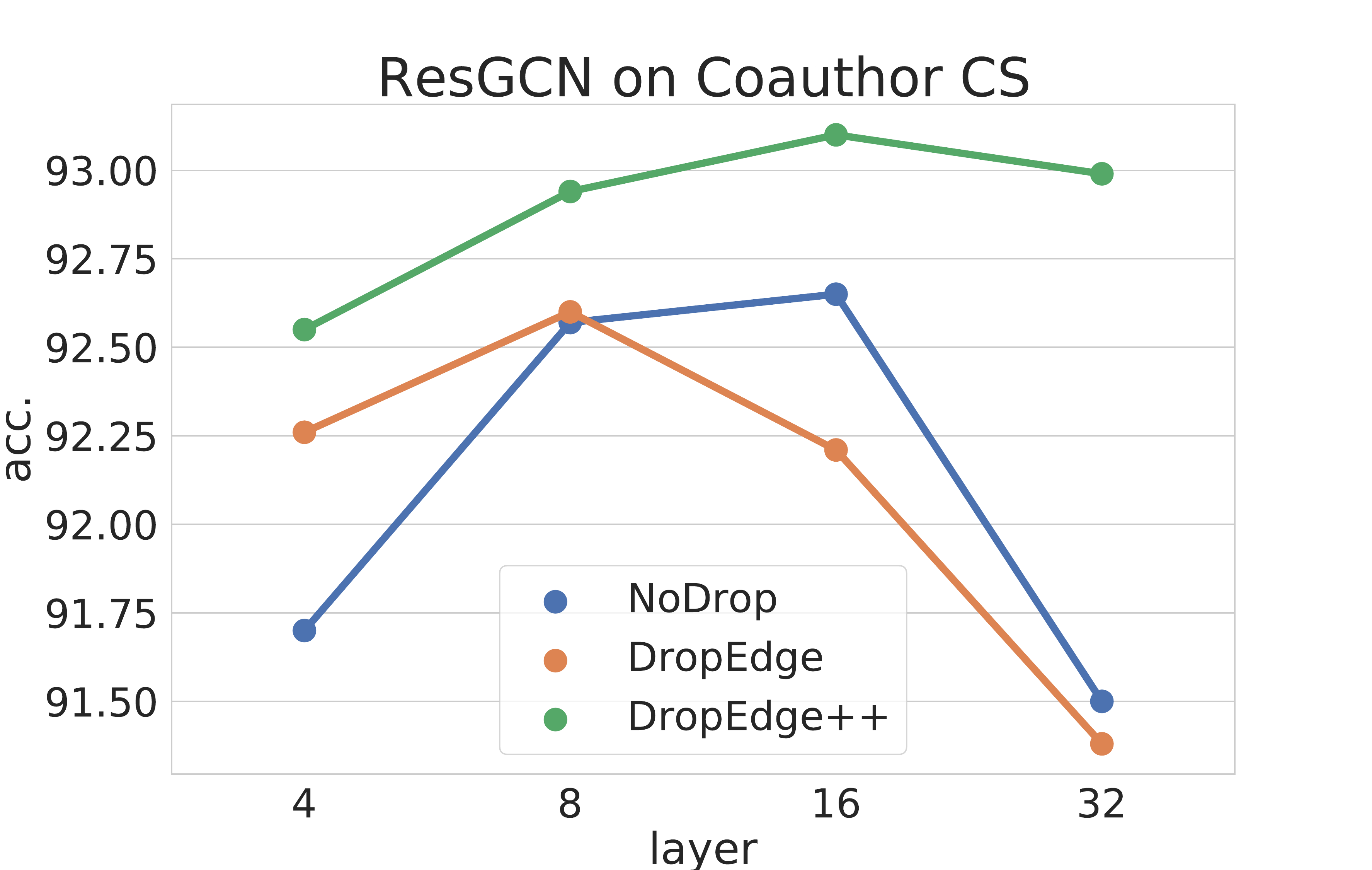}
\includegraphics[width=0.24\textwidth]{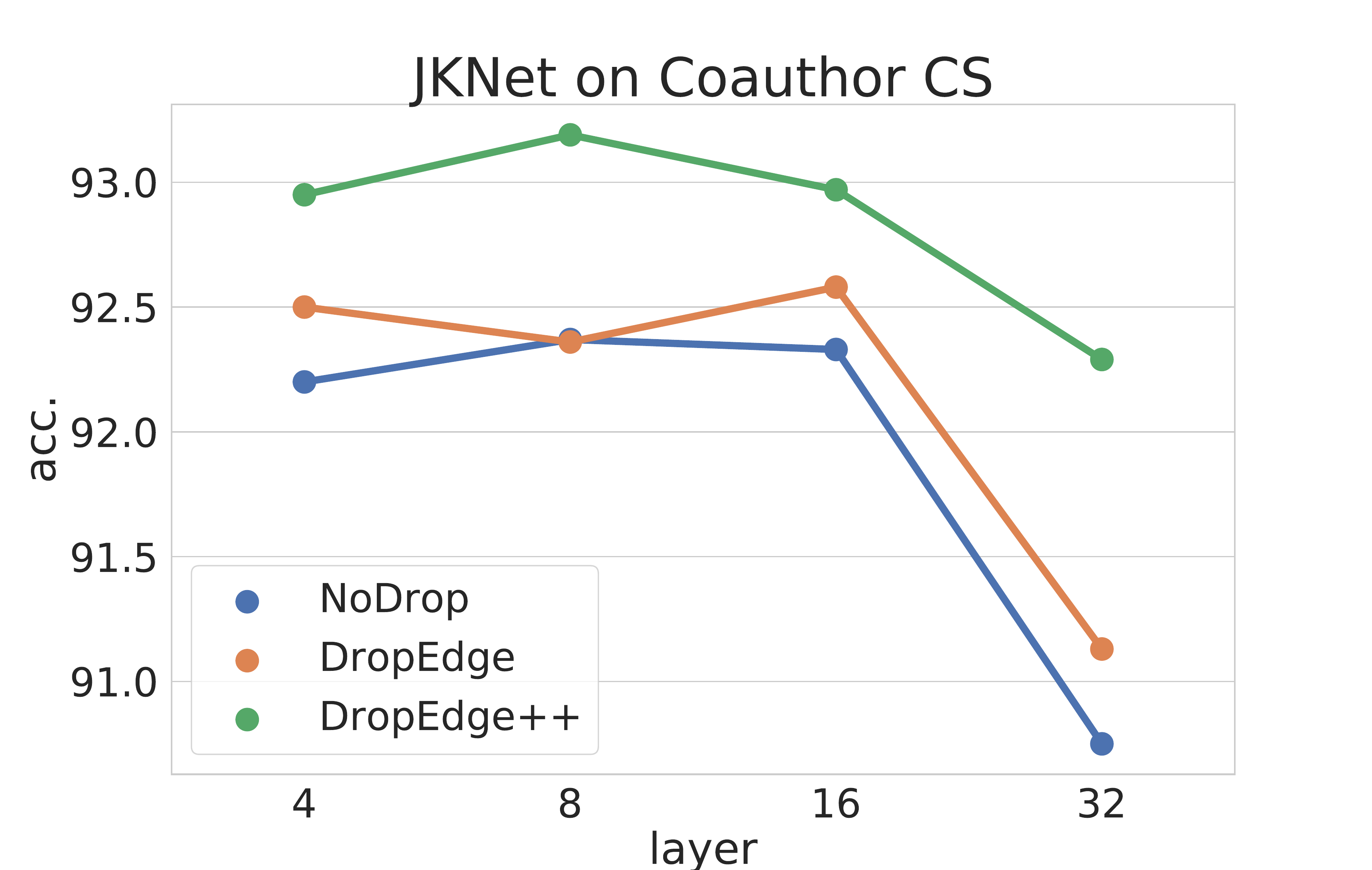}
\includegraphics[width=0.24\textwidth]{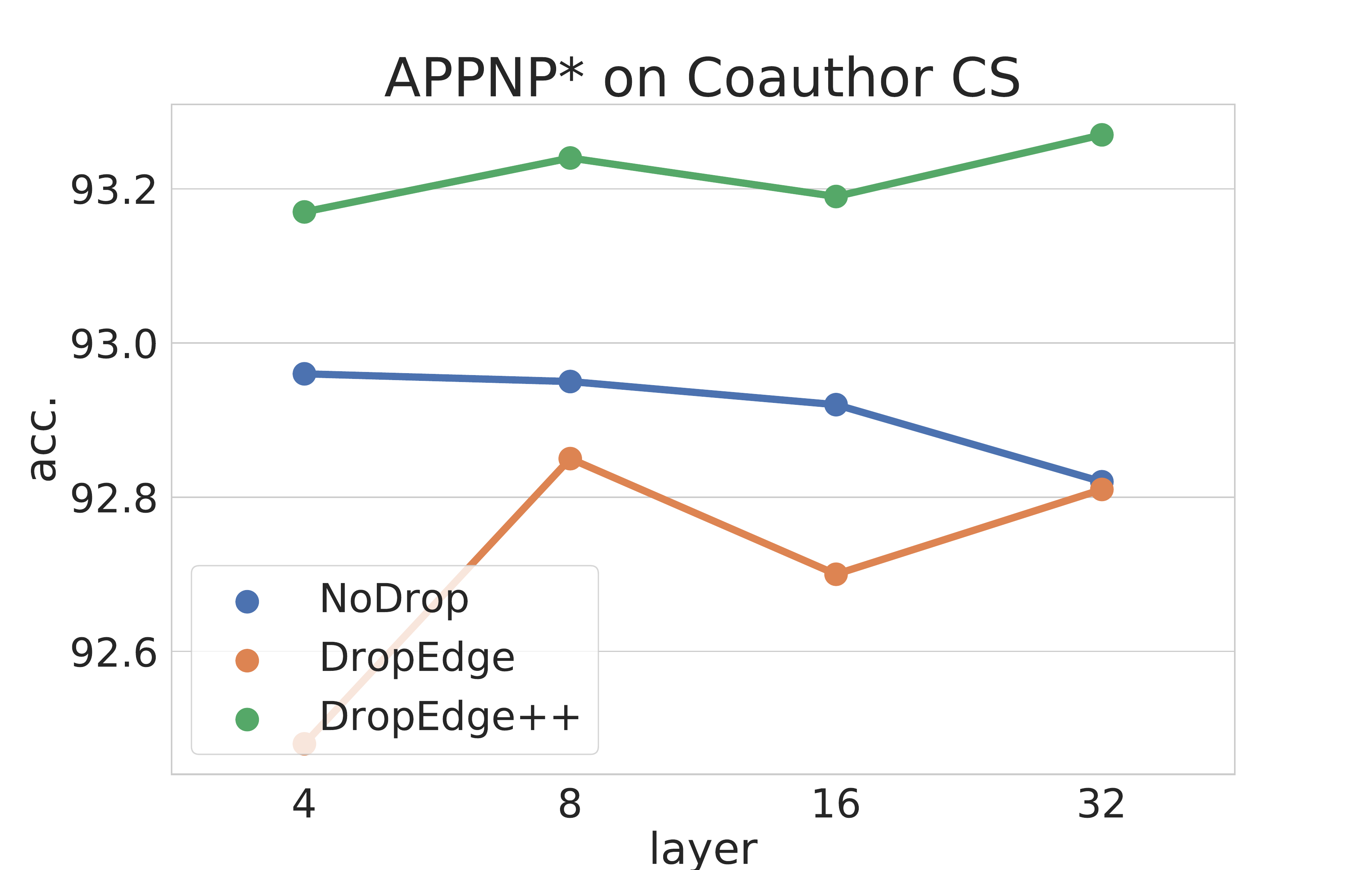}

\includegraphics[width=0.24\textwidth]{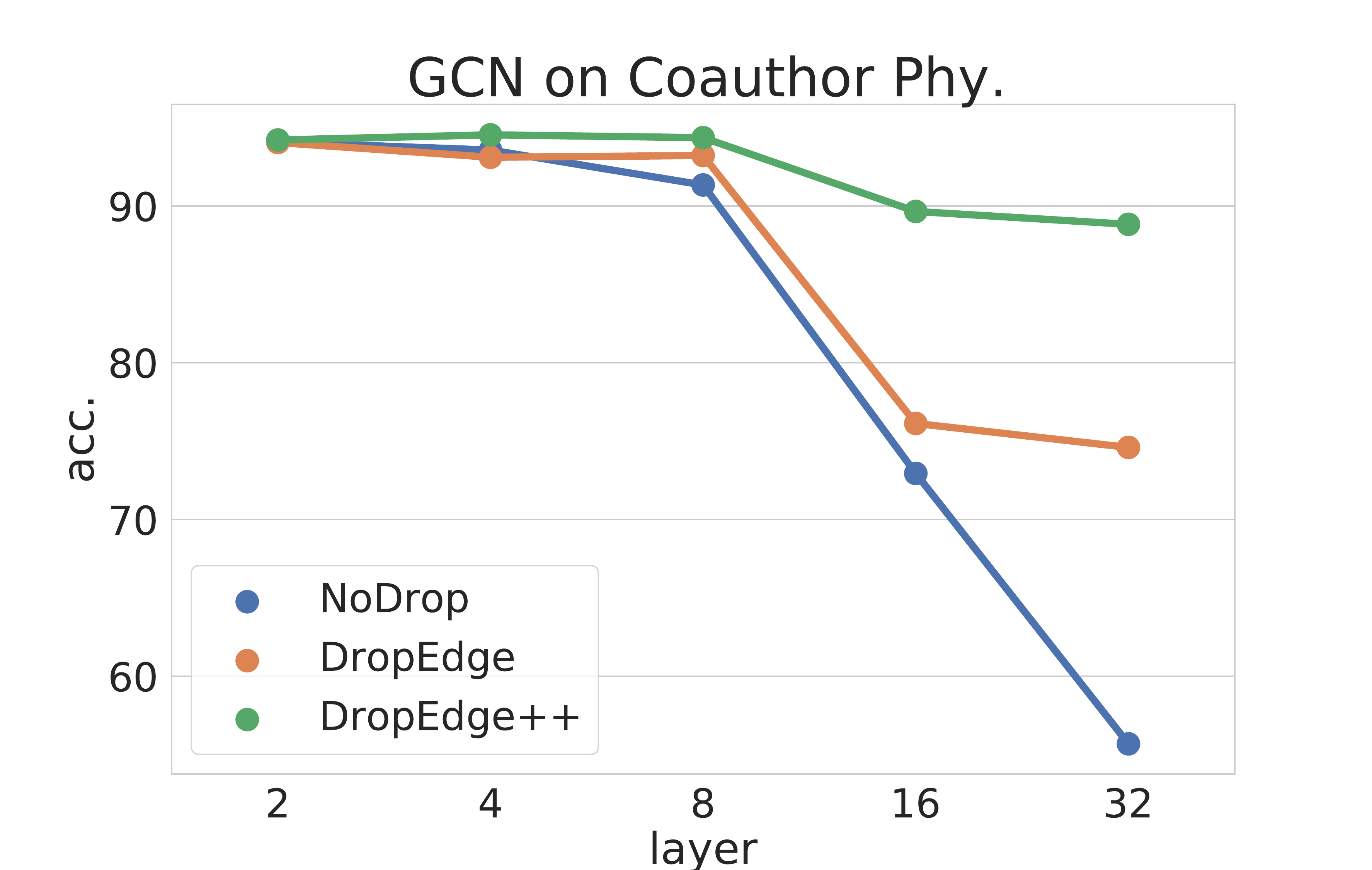}
\includegraphics[width=0.24\textwidth]{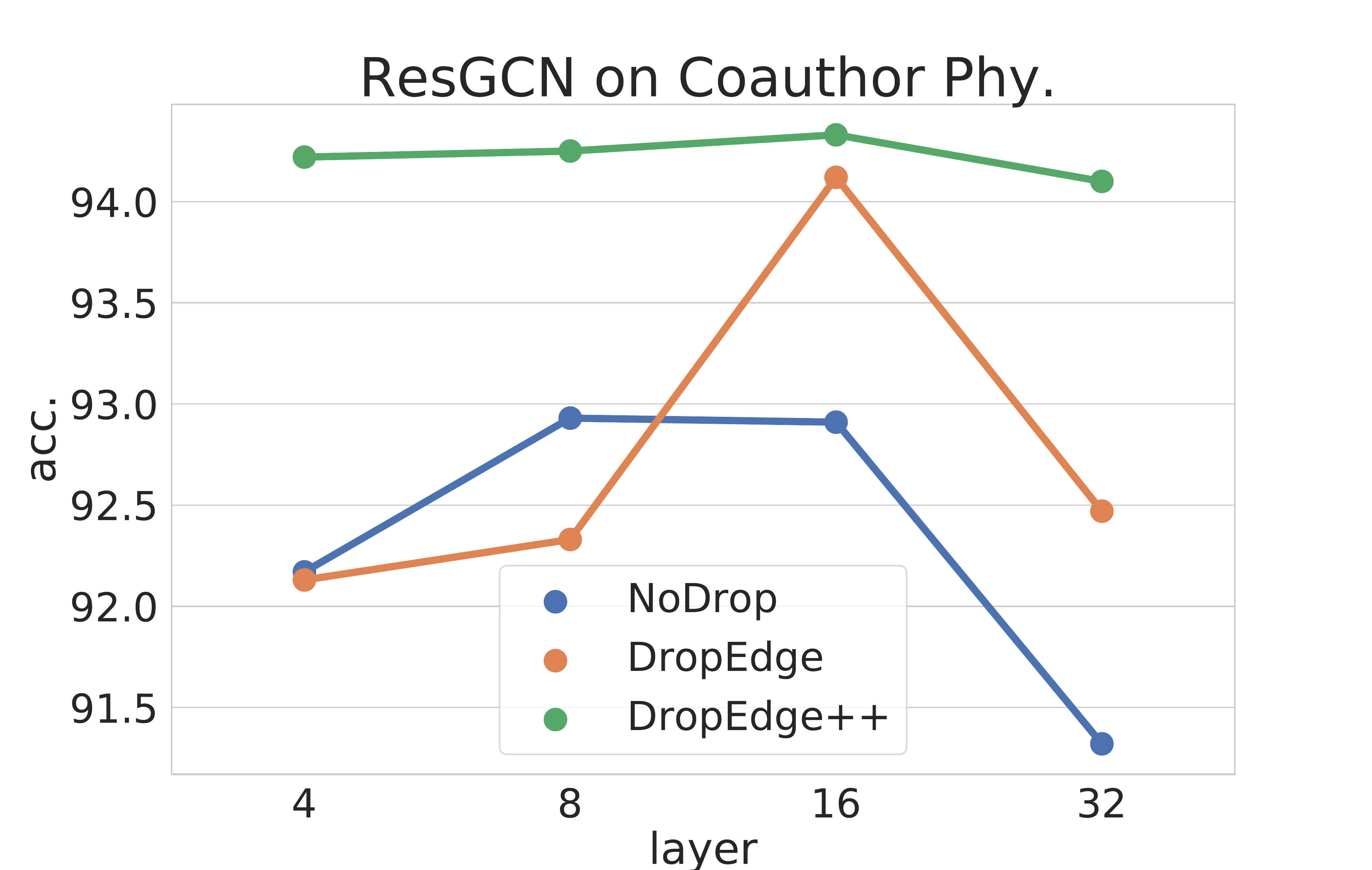}
\includegraphics[width=0.24\textwidth]{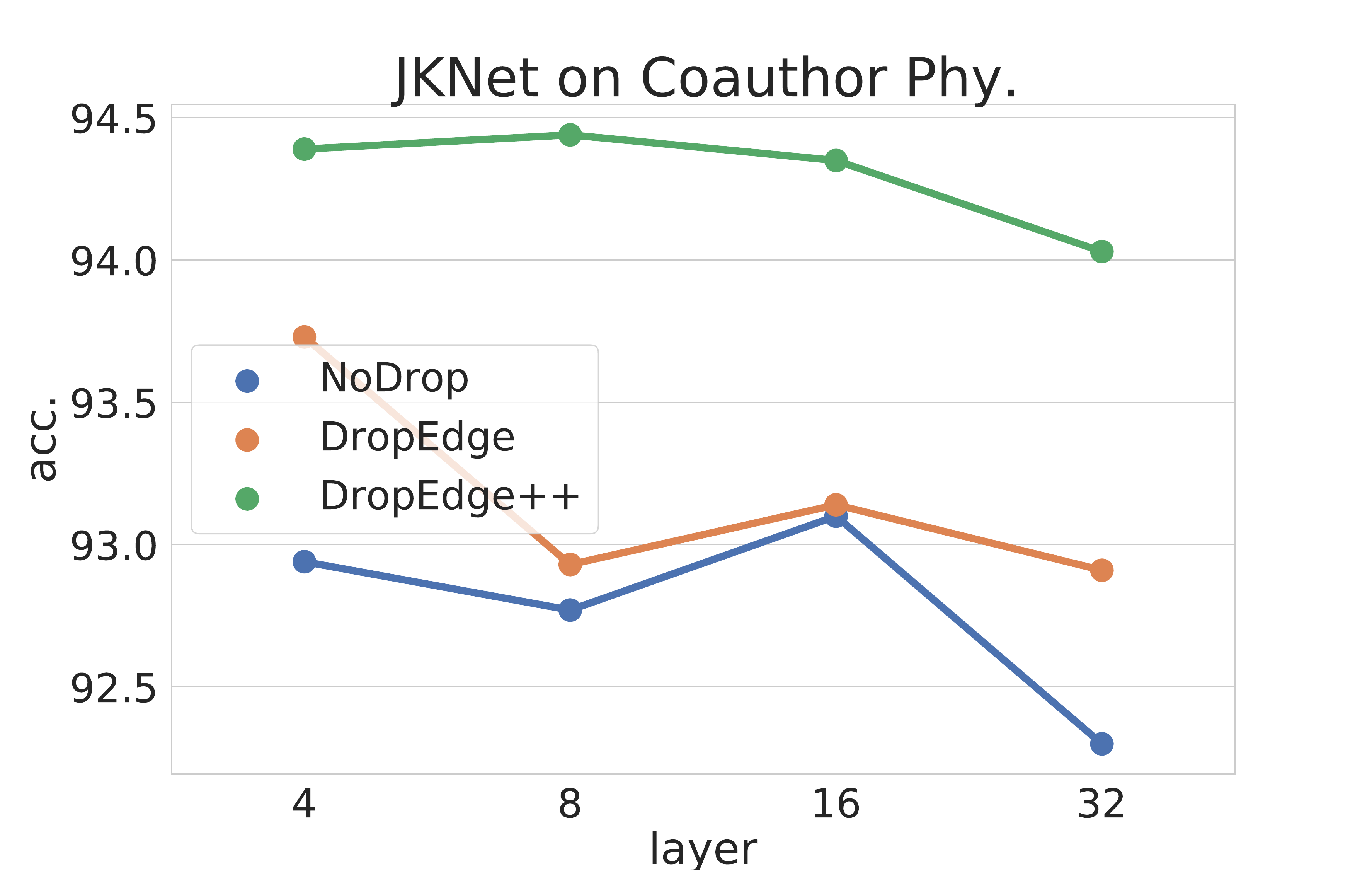}
\includegraphics[width=0.24\textwidth]{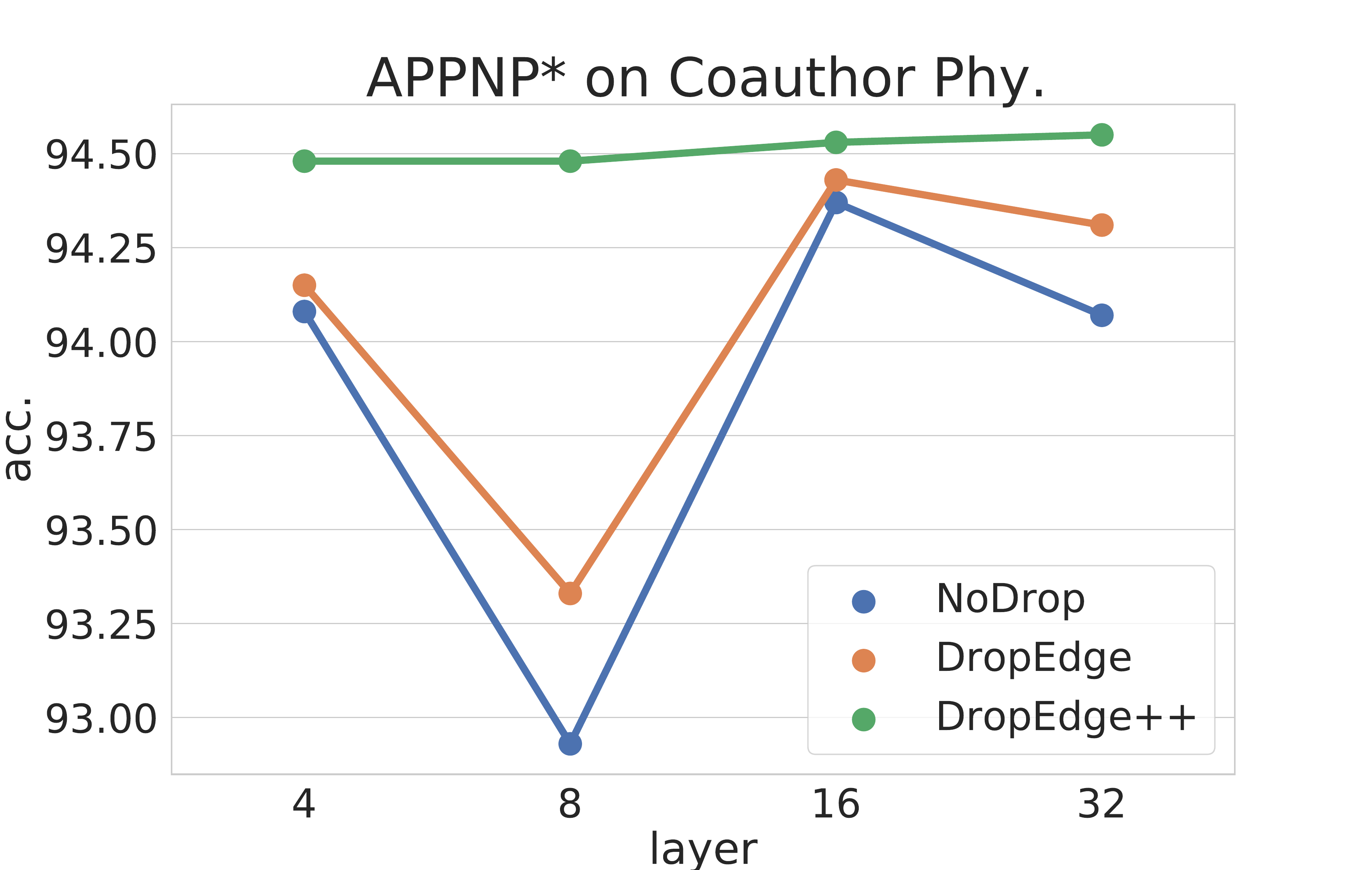}

\includegraphics[width=0.24\textwidth]{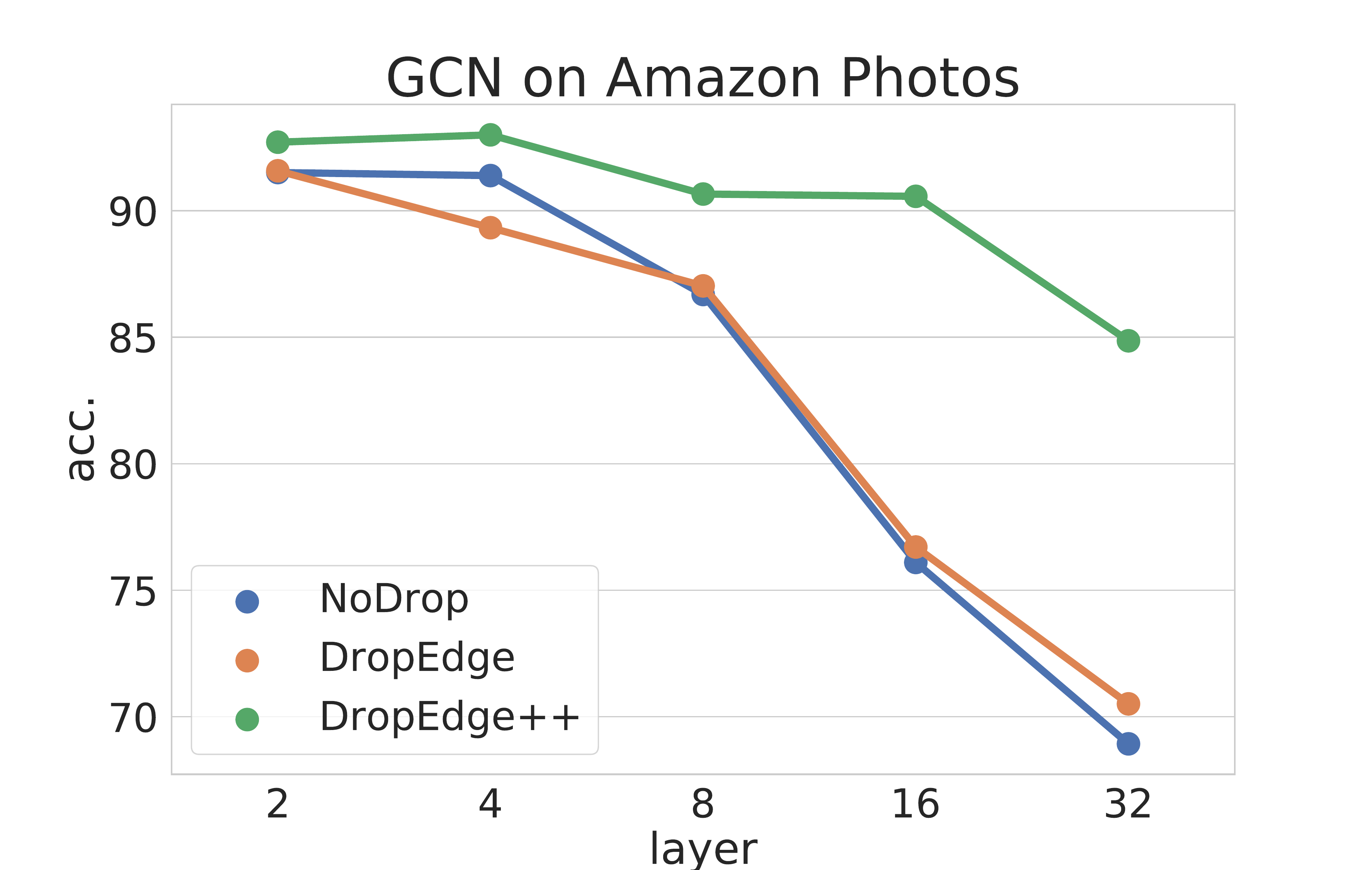}
\includegraphics[width=0.24\textwidth]{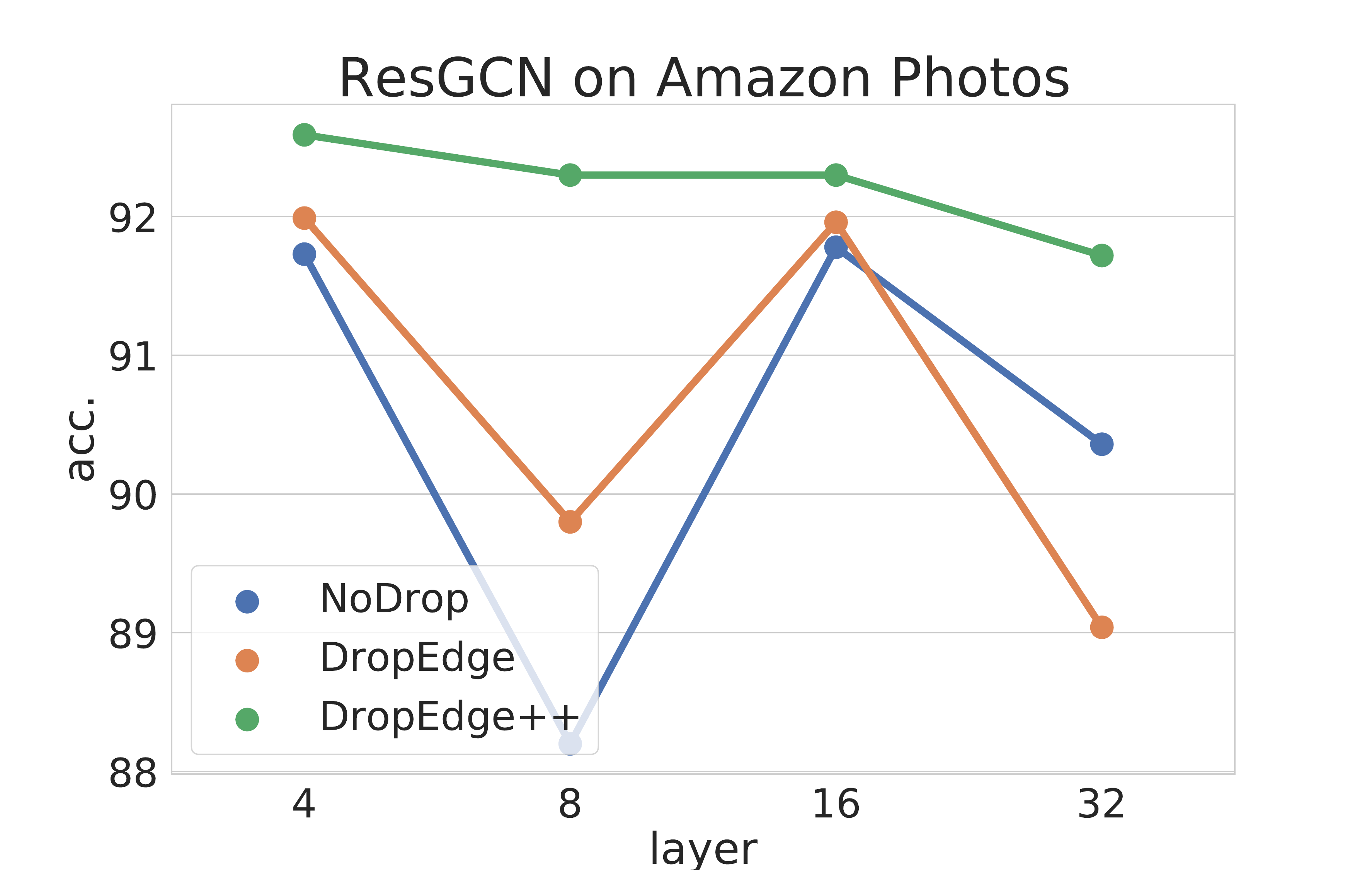}
\includegraphics[width=0.24\textwidth]{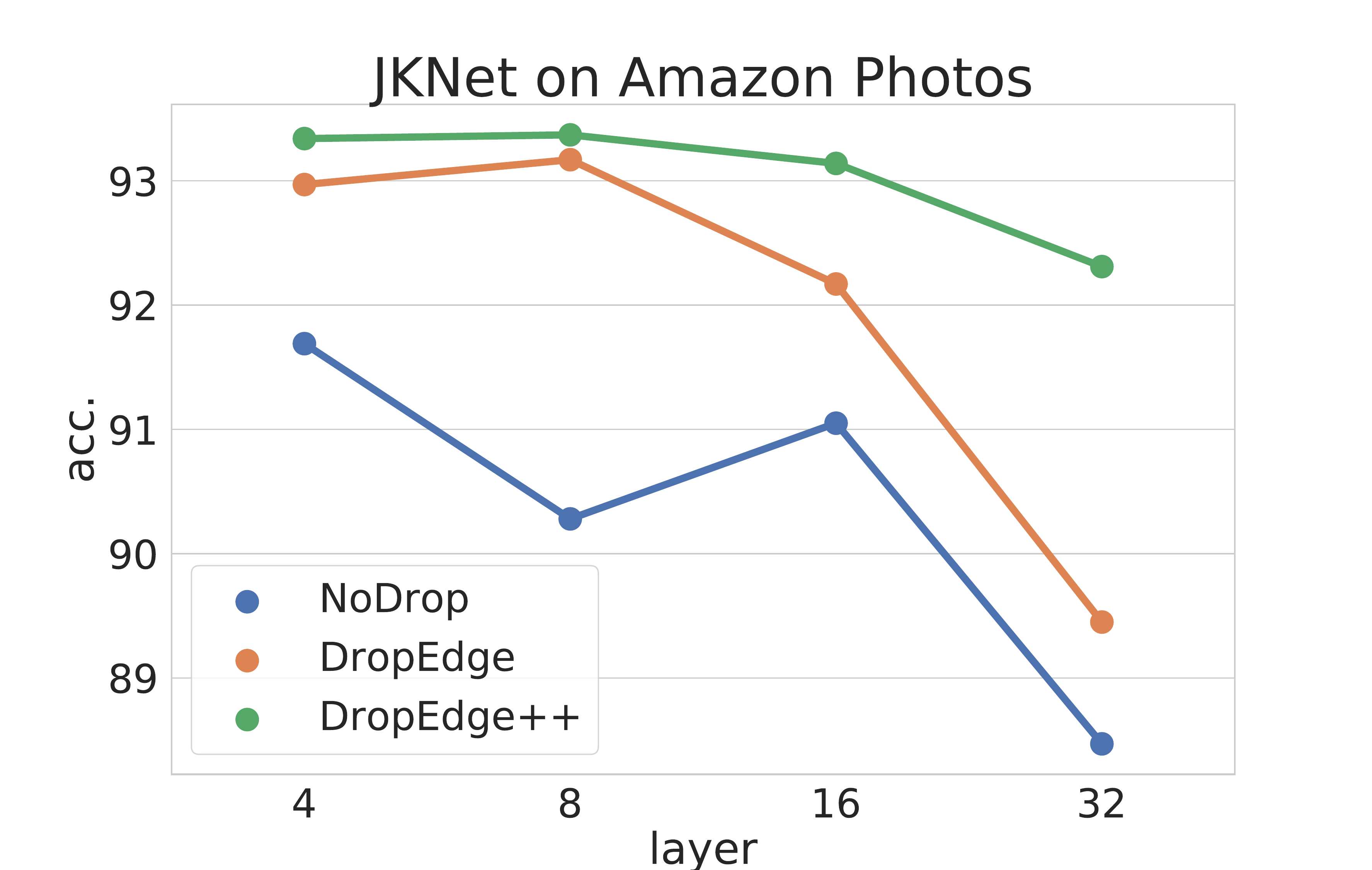}
\includegraphics[width=0.24\textwidth]{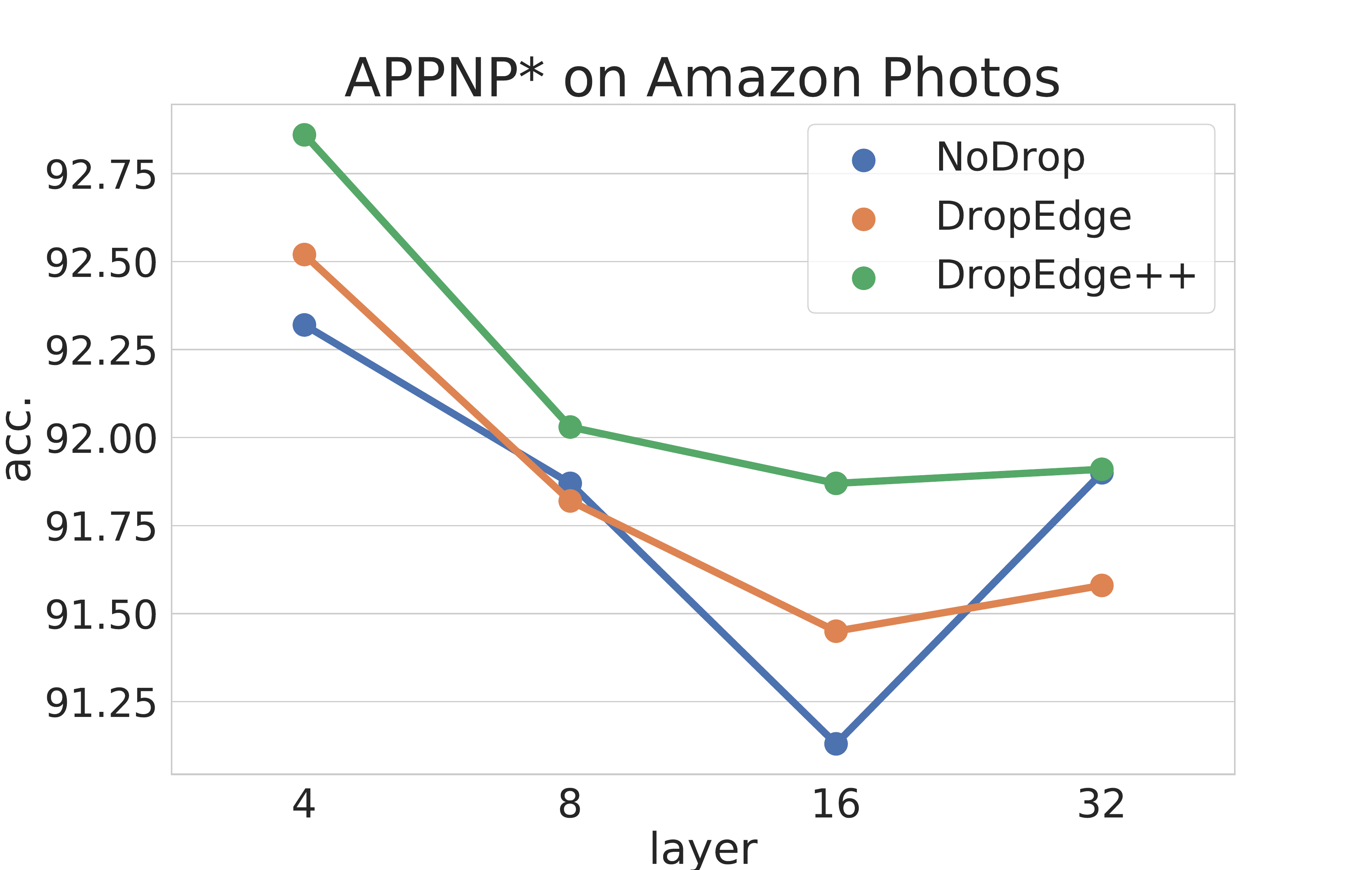}
\caption{Testing accuracy of varying models on Coauthor CS, Coauthor Phy, and Amazon Photos. }
\label{fig.coauthor}
\end{figure*}

\section{More Experiment Details and Results}
\label{sec:exp_details}

\subsection{Hyper-parameters}
\label{sec:hyperparams}
The hyper-parameters are listed in Table~\ref{tab:hyper}. Our proposed DropEdge++ implements the LID sampler, which indicates the inequality $p_{\text{max}} \geq p_{\text{min}}$ always holds. Consequently, it is intuitive to denote $p_{\text{max}}'$ as a replacement of $p_{\text{max}}$ that satisfies $p_{\text{max}}' = (p_{\text{max}} - p_{\text{min}}) / (1 - p_{\text{min}})$. With this transformation, the domain of $p_{\text{max}}$ is modified into $0\leq p_{\text{max}}'\leq 1$ naturally in the LID sampler. In our experiments, we use $p_{\text{min}}$ and $p_{\text{max}}'$ to determine the sampling rates throughout the layers, with $0 \leq p_{\text{min}} \leq 1$ and $0 \leq p_{\text{max}}' \leq 1$.

The adopted normalization methods are presented in Table~\ref{tab:normalization} together with their abbreviations. Table~\ref{tab:hyper-all} shows the values of hyper-parameters for achieving the best testing accuracy, chosen by a random search. Note that \emph{norm.} and \emph{drop.} are short for normalization, and dropout, respectively.
Experiments are conducted with $\text{layer} \in \{ \text{2, 4, 8, 16, 32, 64}\}$ for the citation networks and $\text{layer} \in \{ \text{2, 4, 8, 16, 32}\}$ for the co-author and co-purchase networks due to GPU memory limit; $ \text{hidden} \in \{\text{64, 128, 256}\}$; $L_2 \in \{\text{5e-3, 1e-3, 8e-4, 5e-4, 1e-4, 5e-5, 1e-5}\}$; $\text{dropout}\in \{\text{0.1, 0.3, 0.5, 0.8}\}$; $\alpha \in \{\text{0.1, 0.2, 0.5}\}$. As for the sampling process, $p_{\text{min}}$ and $p_{\text{max}}'$ are both selected from $\{\text{0.05, 0.1, 0.2,}\cdots\text{,1.0}\}$.

\begin{table*}[htbp] 
  \centering
  \caption{Hyper-parameters for achieving the best testing accuracy on all datasets.}
    \begin{tabular}{c|c|ccccccccccc}
    \hline
    \multicolumn{1}{c}{} &        & Layer  & hidden  & lr    & $L_2$    & $p_{\text{min}}$     & $p_{\text{max}}'$     & norm. & drop. & BN    & Loop & $\alpha$ \bigstrut[t]\\
    \hline
    \multirow{4}[2]{*}{Cora (Full)} & GCN    & 2     & 256   & 0.004 & 5e-4  & 0.1   & 0.4   & ARW   & 0.3   &       &  &\bigstrut[t]\\
          & ResGCN   & 4     & 256   & 0.004 & 5e-3  & 0.1   & 0.3   & ARW   & 0.1   &       & \checkmark &\bigstrut[t]\\
          & JKNet    & 8     & 256   & 0.006 & 8e-4  & 0.5   & 0.6   & ARW   & 0.8   &       & \checkmark &\bigstrut[t]\\
         & APPNP* & 32 & 64 & 0.004 & 5e-5 & 0.5 & 1.0 & ARW & 0.5 &  & &0.2\bigstrut[t]\\
    \hline
    \multirow{4}[2]{*}{Citeseer (Full)} & GCN     & 2     & 256   & 0.006 & 1e-3  & 0.4   & 0.9   & AN    & 0.5   &       & \checkmark &\bigstrut[t]\\
          & ResGCN   & 16    & 256   & 0.010 & 5e-5  & 0.5   & 0.7   & ANS   & 0.3   &       & \checkmark &\bigstrut[t]\\
          & JKNet      & 4     & 128   & 0.009 & 5e-4  & 0.2   & 0.4   & ANS   & 0.8   &       & \checkmark &\bigstrut[t]\\
                   & APPNP* & 16 & 128 & 0.010 & 5e-6 & 0.6 & 0.0 & ARW & 0.5 &  & &0.5\bigstrut[t]\\
    \hline
    \multirow{4}[2]{*}{Pubmed (Full)} & GCN     & 2     & 256   & 0.001 & 1e-3  & 0.1   & 0.4   & ANS   & 0.8   & \checkmark     & \checkmark &\bigstrut[t]\\
          & ResGCN   & 8     & 256   & 0.003 & 1e-3  & 0.1   & 0.4   & AN    & 0.8   & \checkmark     & \checkmark &\bigstrut[t]\\
          & JKNet     & 8     & 128   & 0.005 & 1e-4  & 0.6   & 0.9   & ARW   & 0.8   & \checkmark     & \checkmark &\bigstrut[t]\\
                   & APPNP* & 8 & 128 & 0.008 & 5e-5 & 0.9 & 0.6 & ARW & 0.3 &  & &0.5\bigstrut[t]\\
    \hline
    \multirow{4}[2]{*}{Cora (Semi)} & GCN    & 4     & 256   & 0.009 & 1e-3  & 0.8   & 0.2   & ANS  & 0.8   &       &  &\bigstrut[t]\\
          & ResGCN   & 8      & 64  & 0.007 & 5e-5  & 0.6   & 1.0   & ARW   & 0.8  &       &  &\bigstrut[t]\\
          & JKNet    & 64     & 256   & 0.001 & 1e-3  & 0.05   & 1.0   & AN   & 0.8   &       & &\bigstrut[t]\\
                   & APPNP* & 8 & 64 & 0.010 & 8e-4 & 0.9 & 0.8 & NA & 0.5 &  & &0.1\bigstrut[t]\\
    \hline
    \multirow{4}[2]{*}{Citeseer (Semi)} & GCN      & 2     & 128  & 0.009 & 1e-3  & 0.8   & 0.8   & ANS    & 0.8   &       & & \bigstrut[t]\\
          & ResGCN    & 64    & 64   & 0.007 & 1e-3  & 0.6   & 0.6   & ARW   & 0.8   &       & & \bigstrut[t]\\
          & JKNet      & 8    & 128   & 0.007 & 1e-3  & 0.6   & 0.0   & ARW   & 0.8   &       & &\bigstrut[t]\\
                   & APPNP* & 32 & 64 & 0.010 & 8e-4 & 0.9 & 0.4 & NA & 0.5 &  & &0.1\bigstrut[t]\\
    \hline
    \multirow{4}[2]{*}{Pubmed (Semi)} & GCN     & 4    & 256   & 0.005 & 1e-3  & 0.4   & 0.5   & NA   & 0.8   &    &  &\bigstrut[t]\\
          & ResGCN   & 8     & 128  & 0.007 & 1e-3  & 0.6   & 1.0   & ARW    & 0.8   &      & &\bigstrut[t]\\
          & JKNet     & 32     & 64   & 0.007 & 1e-3  & 0.6   & 0.8   & ARW   & 0.8   &     &  &\bigstrut[t]\\
                   & APPNP* & 64 & 256 & 0.004 & 1e-4 & 0.3 & 0.2 & ANS & 0.1 &  & &0.5\bigstrut[t]\\
    \hline
    \multirow{4}[2]{*}{Coauthor CS} & GCN   & 2     & 128   & 0.006 & 1e-3  & 0.2   & 0.7   & ARW   & 0.5   &       & \checkmark &\bigstrut[t]\\
          & ResGCN    & 16    & 128   & 0.009 & 5e-4  & 0.05  & 0.2   & ARW   & 0.1   &       & \checkmark &\bigstrut[t]\\
          & JKNet   & 8     & 128   & 0.003 & 5e-5  & 0.8   & 0.2   & ANS   & 0.5   &       & \checkmark &\bigstrut[t]\\
                   & APPNP* & 8 & 64 & 0.008 & 5e-5 & 0.4 & 0.2 & AN & 0.5 &  & &0.2\bigstrut[t]\\
    \hline
    \multirow{4}[2]{*}{Coauthor Physics} & GCN   & 4     & 256   & 0.002 & 5e-5  & 0.3   & 0.3   & AN    & 0.1   &       & \checkmark &\bigstrut[t]\\
          & ResGCN   & 16    & 256   & 0.006 & 1e-5  & 0.05  & 1.0   & ANS   & 0.3   &       &  &\bigstrut[t]\\
          & JKNet     & 8     & 256   & 0.008 & 1e-5  & 0.7   & 0.8   & ANS   & 0.8   &       &  &\bigstrut[t]\\
                   & APPNP* & 16 & 64 & 0.005 & 8e-6 & 0.7 & 0.5 & AN & 0.5 &  & &0.2\bigstrut[t]\\
    \hline
    \multirow{4}[2]{*}{Amazon Photos} & GCN  & 4     & 128   & 0.006 & 1e-5  & 0.2   & 0.1   & FOG   & 0.5   &       & \checkmark &\bigstrut[t]\\
          & ResGCN   & 4     & 256   & 0.006 & 1e-4  & 0.3   & 0.8   & FOG   & 0.8   &       & \checkmark &\bigstrut[t]\\
          & JKNet    & 8     & 256   & 0.010 & 1e-5  & 0.4   & 0.3   & ARW   & 0.5   &       & \checkmark &\bigstrut[t]\\
                   & APPNP* & 4 & 64 & 0.010 & 1e-5 & 0.7 & 0.5 & ARW & 0.1 &  & &0.2\bigstrut[t]\\
    \hline
    \end{tabular}%
  \label{tab:hyper-all}%
\end{table*}%

\subsection{Propagation models}
Table~\ref{tab:prop} displays the four propagation models we adopted in our experiments, including GCN, ResGCN, JKNet, and our implemented APPNP*. Note that in Table~\ref{tab:prop}, the symbol ``$||$'' stands for concatenation.

\subsection{Kernels}
\label{sec:app_kernels}
We list the detailed mathematical forms of the kernels as follows. The linear kernel $K_{\text{Linear}}(\bm{x}_1,\bm{x}_2)=\bm{x}_1^{\mathrm{T}}\bm{x}_2$; the polynomial kernel $K_{\text{Poly}}(\bm{x}_1,\bm{x}_2)=(\bm{x}_1^{\mathrm{T}}\bm{x}_2)^2$; the radial basis (RBF) kernel $K_{\text{RBF}}(\bm{x}_1,\bm{x}_2) = \text{exp}(l||\bm{x}_1 - \bm{x}_2||^2)$ where we select $l=-6$ by validation. In our experiments, we use the Linear kernel by default.


\subsection{More results on the co-author and co-purchase datasets}
\label{sec:copurchase}
Fig.~\ref{fig.coauthor} displays the comparison on test accuracy with varying layers on Coauthor CS, Coauthor Physics, and Amazon Photos. In accordance with our analysis, DropEdge++ consistently boosts the performance of the backbones, and is very effective in formulating deep networks.

\ifCLASSOPTIONcaptionsoff
  \newpage
\fi



\bibliographystyle{IEEEtran}
\bibliography{IEEEabrv, ref}
\end{document}